\newtheorem{theorem}{Theorem}
\newtheorem{lemma}{Lemma}
\newtheorem{corollary}{Corollary}
\newtheorem{problem}{Open Problem}
\DeclareMathOperator{\conv}{conv}
\DeclareMathOperator{\Var}{Var}
\newcommand{\BaSE}{\textsf{BaSE}}
\newcommand{\Decompose}{\textsf{Decompose}}
\newcommand{\R}{\mathbb{R}}
\newcommand{\E}{\mathbb{E}}
\newcommand{\eps}{\varepsilon}
\newcommand{\cP}{\mathcal{P}}
\newcommand{\cN}{\mathcal{N}}
\newcommand{\Dc}{\mathcal{D}}
\title{Anonymous Bandits for Multi-User Systems}
\author{%
  Hossein Esfandiari \\
  Google Research\\
  \texttt{esfandiari@google.com} \\
  \And
  Vahab Mirrokni \\
  Google Research \\
  \texttt{mirrokni@google.com} \\
  \And
  Jon Schneider \\
  Google Research \\
  \texttt{jschnei@google.com} \\
}
\begin{document}

\maketitle

\begin{abstract}
    In this work, we present and study a new framework for online learning in systems with multiple users that provide user anonymity. Specifically, we extend the notion of bandits to obey the standard $k$-anonymity constraint by requiring each observation to be an aggregation of rewards for at least $k$ users. This provides a simple yet effective framework where one can learn a clustering of users in an online fashion without observing any user's individual decision. We initiate the study of \emph{anonymous bandits} and provide the first sublinear regret algorithms and lower bounds for this setting.
\end{abstract}

\section{Introduction}
In many modern systems, the system learns the behavior of the users by adaptively interacting with the users and adjusting to their responses. For example, in online advertisement, a website may show a certain type of ads to a user, observe whether the user clicks on the ads or not, and based on this feedback adjust the type of ads it serves the user. As part of this learning process, the system faces a dilemma between ``exploring'' new types of ads, which may or may not seem interesting to the user, and ``exploiting'' its knowledge of what types of ads historically seemed interesting to the user. This concept is very well studied in the context of online learning. 

In principle, users benefit from a system that automatically adapts to their preferences. However, users may naturally worry about a system that observes all of their actions, and worry that the system may use this personal information against them or mistakenly reveal it to hackers or untrustworthy third parties. There therefore arises an additional dilemma between providing a better personalized user experience and acquiring the users' trust. 

We study the problem of online learning in multi-user systems under a version of anonymity inspired by \emph{$k$-anonymity} \citep{sweeney2002k}. In its most general form, $k$-anonymity is a property of anonymous data guaranteeing that for every data point, there exist at least $k-1$ other indistinguishable data points in the dataset. Although there exist more recent notions of privacy and anonymity with stronger guarantees (e.g. various forms of differential privacy), $k$-anonymity is a simple and practical notion of anonymity that remains commonly employed in practice and enforced in various legal settings \citep{goldsteen2021data, SafariN2019, SLIJEPCEVIC2021102488}. 

To explain our notion of anonymity, consider the online advertisement application mentioned earlier. In online advertisement, when a user visits a website the website selects a type of ad and shows that user an ad of that type. The user may then choose to click on that ad or not, and a reward is paid out based on whether the user clicks on the ad or not (this reward may represent either the utility of the user or the revenue of the online advertisement system). Note that the ad here is chosen by the website, and the fact that the website assigns a user a specific type of ad is not something we intend to hide from the website. However, the decision to click on the ad is made by the user. We intend to protect these individual decisions, while allowing the website to learn what general types of ads each user is interested in. In particular, we enforce a form of group level $C$-anonymity on these decisions, by forcing the system to group users into groups of at least $C$ users and to treat each group equally by assigning all users in the same group the same ad and only observing the total aggregate reward (e.g. the total number of clicks) of these users.

More formally, we study this version of anonymity in a simple model of online learning based on the popular multi-armed bandit setting. In the classic (stochastic) multi-armed bandits problem, there is a learner with some number $K$ of potential actions (``arms''), where each arm is associated with an unknown distribution of rewards. In each round (for $T$ rounds) the learner selects an arm and collects a reward drawn from a distribution corresponding to that arm. The goal of the learner is to maximize their total expected reward (or equivalently, minimize some notion of regret). 

In our multi-user model, a centralized learner assigns $N$ users to $K$ arms, and each rewards for each user/arm pair are drawn from some fixed, unknown distribution. Each round the 
learner proposes an assignment of users to arms, upon which each user receives a reward from the appropriate user/arm distribution. However, the learner is only allowed to record feedback about these rewards (and use this feedback for learning) if they perform this assignment in a manner compatible with \emph{$C$-anonymity}. This entails partitioning the users into groups of size at least $C$, assigning all users in each group to the same arm, and only observing this group's aggregate rewards for this arm. For example, if $C=3$ in one round we may combine users $1$, $2$ and $4$ into a group and assign them all to arm $3$; we would then observe as feedback the aggregate reward $r_{13} + r_{23} + r_{43}$, where $r_{ij}$ represents the reward that user $i$ experienced from arm $j$ this round. The goal of the learner is to maximize the total reward by efficiently learning the optimal action for each user, while at the same time preserving the anonymity of individual rewards users experienced in specific rounds. See Section~\ref{sec:prelim} for a more detailed formalization of the model.

\subsection{Our results}

In this paper we provide low-regret algorithms for the anonymous bandit setting described above. We present two algorithms which operate in different regimes (based on how users cluster into their favorite arms):

\begin{itemize}
    \item If for each arm $j$ there are at least $U \geq C+1$ users for which arm $j$ is that user's optimal arm, then Algorithm \ref{alg:main} incurs regret at most $\tilde{O}(NC\sqrt{\alpha KT})$, where $\alpha = \max(1, \lceil K(C+1)/U \rceil)$.
    \item If there is no such guarantee (but $N > C$), then Algorithm \ref{alg:nou} incurs regret at most $\tilde{O}(C^{2/3}K^{1/3}T^{2/3})$.
\end{itemize}

We additionally prove the following corresponding lower bounds:

\begin{itemize}
    \item We show (Theorem \ref{thm:main_lb}) that the regret bound of Algorithm \ref{alg:alg1} is tight for any algorithm to within a factor of $K\sqrt{C}$ (and to within a factor of $\sqrt{C}$ for $U \geq K(C+1)$).
    
    \item We show (Theorem \ref{thm:lb_nou}) that the dependence on $T^{2/3}$ in Algorithm \ref{alg:nou} is necessary in the absence of a lower bound on $U$. 
\end{itemize}

The main technical contribution of this work is the development/analysis of Algorithm \ref{alg:main}. The core idea behind Algorithm \ref{alg:main} is to use recent algorithms developed for the problem of \textit{batched bandits} (where instead of $T$ rounds of adaptivity, users are only allowed $B \ll T$ rounds of adaptivity) to reduce this learning problem to a problem in combinatorial optimization related to decomposing a bipartite weighted graph into a collection of degree-constrained bipartite graphs. We then use techniques from combinatorial optimization and convex geometry to come up with efficient approximation algorithms for this combinatorial problem.

\subsection{Related work}
The bandits problem has been studied for almost a century~\citep{thompson1933likelihood}, and it has been extensively studied in the standard single user version~\citep{audibert2009minimax,audibert2009exploration,audibert2010regret,auer2002finite,auer2010ucb,bubeck2013bounded,garivier2011kl,lai1985asymptotically}. There exists recent work on bandits problem for systems with multiple users~\citep{bande2019adversarial,bande2019multi,bande2021dynamic,vial2021robust,buccapatnam2015information,chakraborty2017coordinated,kolla2018collaborative,landgren2016distributed,sankararaman2019social}. These papers study this problem from game theoretic and optimization perspectives (e.g. studying coordination / competition between users) and do not consider anonymity. To the best of our knowledge this paper is the first attempt to formalize and study multi-armed bandits with multiple users from an anonymity perspective.

Learning how to assign many users to a (relatively) small set of arms can also be thought of as a clustering problem. Clustering of users in multi-user multi-arm bandits has been previously studied. \cite{maillard2014latent} first studied sequentially clustering users, which was later followed up by other researchers~\citep{nguyen2014dynamic,gentile2017context,korda2016distributed}. Although these works, similar to us, attempt to cluster the users, they are allowed to observe each individual's reward and optimize based on that, which contradicts our anonymity requirement. 

One technically related line of work that we heavily rely on is recent work on batched bandits. In the problem of batched bandits, the learner is prevented from iteratively and adaptively making decisions each round; instead the learning algorithm runs in a small number of ``batches'', and in each batch the learner chooses a set of arms to pull, and observes the outcome at the end of the batch. Batched multi-armed bandits were initially studied by \cite{perchet2016batched} for the particular case of two arms. Later \cite{gao2019batched} studied the problem for multiple arms. \cite{esfandiari2019regret} improved the result of Gao et al. and extended it to linear bandits and adversarial multi-armed bandits. Later this problem was studied for batched Thompson sampling~\citep{kalkanli2021batched,karbasi2021parallelizing}, Gaussian process bandit optimization~\citep{li2021gaussian} and contextual bandits~\citep{zhang2021almost,zanette2021design}.

In another related line of work, there have been several successful attempts to apply different notions of privacy such as differential privacy to multi-armed bandit settings ~\citep{tossou2016algorithms,shariff2018differentially,dubey2020differentially,basu2019differential}. While these papers provide very promising guarantees of privacy measures, they focus on single-user settings. In this work we take advantage of the fact that there are several similar users in the system, and use this to provide guarantees of anonymity. Anonymity and privacy go hand in hand, and in a practical scenario, both lines of works can be combined to provide a higher level of privacy.

Finally, our setting has some similarities to the settings of stochastic linear bandits ~\citep{dani2008stochastic, rusmevichientong2010linearly, abbasi2011improved} and stochastic combinatorial bandits ~\citep{chen2013combinatorial, kveton2015tight}. For example, the superficially similar problem of assigning $N$ users to $K$ arms each round so that each arm has at least $C$ users assigned to it (and where you get to observe the total reward per round) can be solved directly by algorithms for these frameworks. However, although such assignments are $C$-anonymous, there are important subtleties that prevent us from directly applying these techniques in our model. First of all, we do not actually constrain the assignment of users to arms -- rather, our notion of anonymity constrains what feedback we can obtain from such an assignment (e.g., it is completely fine for us to assign zero users to an arm, whereas in the above model no actions are possible when $N < CK$). Secondly, we obtain more nuanced feedback than is assumed in these frameworks (specifically, we get to learn the reward of each group of $\geq C$ users, instead of just the total aggregate reward). Nonetheless it is an interesting open question if any of these techniques can be applied to improve our existing regret bounds (perhaps some form of linear bandits over the anonymity polytopes defined in Section \ref{sec:lp_decomp}). 

\section{Model and preliminaries}\label{sec:prelim}

\paragraph{Notation.} We write $[N]$ as shorthand for the set $\{1, 2, \dots, N\}$. We write $\tilde{O}(\cdot)$ to suppress any poly-logarithmic factors (in $N$, $K$, $C$, or $T$) that arise. We say a random variable $X$ is $\sigma^2$-subgaussian if the mean-normalized variable $Y = X - \E[x]$ satisfies $\E[\exp(sY)] \leq \exp(\sigma^2s^2/2)$ for all $s \in \R$.

Proofs of most theorems have been postponed to Appendix \ref{app:proofs} of the Supplemental Material in interest of brevity.

\subsection{Anonymous bandits}

In the problem of \textit{anonymous bandits}, there are $N$ users. Each round (for $T$ rounds), our algorithm must assign each user to one of $K$ arms (multiple users can be assigned to the same arm). If user $i$ plays arm $j$, they receive a reward drawn independently from a 1-subgaussian distribution $\mathcal{D}_{i, j}$ with (unknown) mean $\mu_{i, j} \in [0, 1]$. We would like to minimize their overall expected regret of our algorithm. That is, if user $i$ receives reward $r_{i, t}$ in round $t$, we would like to minimize

$$\mathbf{Reg} = T\sum_{i=1}^{N}\max_{j}\mu_{i, j} - \E\left[\sum_{t=1}^{T}\sum_{i=1}^{N} r_{i, t}\right].$$

Thus far, this simply describes $n$ independent parallel instances of the classic multi-armed bandit problem. We depart from this by imposing an \textit{anonymity constraint} on how the learner is allowed to observe the users' feedback. This constraint is parameterized by a positive integer $C$ (the minimum group size). In a given round, the learner may partition a subset of the users into groups of size at least $C$, under the constraint that the users within a single group must all be playing the same arm during this round. For each group $G$, the learner then receives as feedback the total reward $\sum_{i \in G} r_{i, t}$ received by users in this group. Note that not all users must belong to a group (the learner simply receives no feedback on such users), and the partition into groups is allowed to change from round to round.

Without any constraint on the problem instance, it may be impossible to achieve sublinear regret (see Section \ref{sec:lbs}). We therefore additionally impose the following \textit{user-cluster assumption} on the users: each arm $j$ is the optimal arm for at least $U$ users. Such an assumption generally holds in practice (e.g., in the regime where there are many users but only a few classes of arms). This also prevents situations where, e.g., only a single user likes a given arm but it is hard to learn this without allocating at least $C$ users to this arm and sustaining significant regret. Typically we will take $U > C$; when $U \leq C$ the asymptotic regret bounds for our algorithms may be worse (see Section \ref{sec:nou}).

\subsection{Batched stochastic bandits}\label{sec:batched}

Our main tool will be algorithms for \textit{batched stochastic bandits}, as described in \cite{gao2019batched}.  For our purposes, a batched bandit algorithm is an algorithm for the classical multi-armed bandit problem that proceeds in $B$ stages (``batches'') where the $b$th stage has a predefined length of $D_{b}$ rounds (with $\sum_{b} D_{b} = T$). At the beginning of each stage $b$, the algorithm outputs a non-empty subset of arms $A_b$ (representing the set of arms the algorithm believes might still be optimal). At the end of each stage, the algorithm expects at least $ D_b/|A_b|$ independent instances of feedback from arm $j$ for each $j \in S$; upon receiving such feedback, the algorithm outputs the subset of arms $A_{b+1}$ to explore in the next batch.

In \cite{gao2019batched}, the authors design a batched bandit algorithm they call $\BaSE$ (batched successive-elimination policy); for completeness, we reproduce a description of their algorithm in Appendix \ref{app:base}. When $B = \log\log T$, their algorithm incurs a worst-case expected regret of at most $\tilde{O}(\sqrt{KT})$. In our analysis, we will need the following slightly stronger bound on the behavior of $\BaSE$:

\begin{lemma}\label{lem:base}
Set $B = \log\log T$. Let $\mu^{*} = \max_{j \in [K]} \mu_{j}$, and for each $j \in [K]$ let $\Delta_{j} = \mu^* - \mu_{j}$. Then for each $1 \leq b \leq B$, we have that:

$$D_{b} \cdot \E\left[ \max_{j \in A_{b}} \Delta_{j} \right] = \tilde{O}(\sqrt{KT}).$$
\end{lemma}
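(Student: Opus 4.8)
The plan is to look inside the $\BaSE$ algorithm rather than invoke its $\tilde{O}(\sqrt{KT})$ regret bound as a black box: that bound controls $D_b$ times the \emph{average} gap of the surviving set $A_b$, whereas the lemma asks to control $D_b$ times the \emph{largest} surviving gap, and these can differ by a factor of $|A_b| \le K$. The extra leverage comes from the elimination step of $\BaSE$, which forces arms with large gaps to be discarded quickly.

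First I would record two facts about $\BaSE$ run with $B = \log\log T$ batches on its minimax grid $0 = t_0 < t_1 < \cdots < t_B = T$, writing $D_b = t_b - t_{b-1}$: (i) the grid satisfies $t_1 = \tilde{O}(\sqrt{KT})$ and $D_b \le \tilde{O}(\sqrt{T t_{b-1}})$ for every $b \ge 2$ (this is exactly what the minimax grid of \cite{gao2019batched} is engineered to achieve, and it is also what makes $B = \log\log T$ batches suffice to reach $T$); and (ii) before batch $b$ begins, every arm in $A_b$ has been pulled at least $t_{b-1}/K$ times, since it receives at least $D_{b'}/|A_{b'}| \ge D_{b'}/K$ fresh samples in each earlier batch $b'$ and $\sum_{b' \le b-1} D_{b'} = t_{b-1}$.

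Next I would set up the usual clean event. Put $\gamma_b = c\sqrt{K\log(KT)/t_b}$ for a suitable absolute constant $c$, and let $E$ be the event that for every $b \le B$ and every $j \in A_b$ the empirical mean of arm $j$ formed from its samples through batch $b-1$ differs from $\mu_j$ by at most $\gamma_{b-1}/4$. Since the rewards are $1$-subgaussian, a Hoeffding-type tail bound on each of the at most $KB$ relevant (arm, batch) pairs together with a union bound gives $\Pr[E^c] \le 1/T$ once $c$ is large enough. On $E$ the optimal arm is never eliminated, and $\BaSE$'s elimination rule (discard an arm once its empirical mean drops more than $\Theta(\gamma_{b-1})$ below the current best) guarantees that any arm $j$ that survives into batch $b \ge 2$ has $\Delta_j = \tilde{O}(\sqrt{K/t_{b-1}})$ — using fact (ii) so that $\gamma_{b-1}$ is the right confidence radius — and hence $\max_{j \in A_b}\Delta_j = \tilde{O}(\sqrt{K/t_{b-1}})$ on $E$. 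For $b = 1$ we use only the trivial bound $\max_{j \in A_1}\Delta_j \le 1$.

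Finally I would assemble the estimate by conditioning on $E$. The contribution of $E^c$ is at most $D_b \cdot 1 \cdot \Pr[E^c] \le T \cdot (1/T) = 1$. On $E$, for $b = 1$ we get $D_1 \cdot 1 = t_1 = \tilde{O}(\sqrt{KT})$, and for $b \ge 2$,
\[
D_b \cdot \max_{j \in A_b} \Delta_j
= \tilde{O}\!\left(\sqrt{T t_{b-1}}\right) \cdot \tilde{O}\!\left(\sqrt{K/t_{b-1}}\right)
= \tilde{O}(\sqrt{KT}),
\]
the $t_{b-1}$ factors cancelling exactly, which is the crux of why the minimax grid is chosen this way. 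Adding the two contributions proves the lemma. The only part needing real care is importing the precise minimax-grid parameters and the elimination threshold from \cite{gao2019batched} so that facts (i)--(ii) and the implication ``survives into batch $b$ $\Rightarrow$ $\Delta_j = \tilde{O}(\sqrt{K/t_{b-1}})$'' hold with matching constants; the probabilistic and arithmetic steps are routine.
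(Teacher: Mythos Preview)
Your proposal is correct and follows essentially the same route as the paper's proof: set up a Hoeffding-based clean event, observe that the $\BaSE$ elimination rule forces any arm surviving into batch $b$ to have $\Delta_j = \tilde{O}(\sqrt{K/t_{b-1}})$ (using that each alive arm has $\ge t_{b-1}/K$ pulls), and then multiply by $D_b$ using the minimax-grid geometry so that the $t_{b-1}$ factors cancel. The only cosmetic difference is that the paper works directly with the explicit grid formula $u_b = a^{2-2^{1-b}}$ (with $a = T^{1/(2-2^{1-B})}$) and carries the algebra through in closed form, whereas you abstract this as the relation $D_b \le \tilde{O}(\sqrt{T\,t_{b-1}})$ and treat $b=1$ separately.
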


In other words, Lemma \ref{lem:base} bounds the expected regret in each batch, even under the assumption that we receive the reward of the worst active arm each round (even if we ask for feedback on a different arm).

It will also be essential in the analysis that follows that the total number of rounds $D_{b}$ in the $b$th batch only depends on $b$ and is independent of the feedback received thus far (in the language of \cite{gao2019batched}, the grid used by the batched bandit algorithm is \textit{static}). This fact will let us run several instances of this batched bandit algorithm in parallel, and guarantees that batches for different instances will always have the same size. 

\section{Anonymous Bandits}

\subsection{Feedback-eliciting sub-algorithm}\label{sec:fesa}

Our algorithms for anonymous bandits will depend crucially on the following sub-algorithm, which allows us to take a matching from users to arms and recover (in $O(C)$ rounds) an unbiased estimate of each user's reward (as long as that user is matched to a popular enough arm). More formally, let $\pi:[N] \rightarrow [K]$ be a matching from users to arms. We will show how to (in $2C+2$ rounds) recover an unbiased estimate of $\mu_{i, \pi(i)}$ for each $i$ such that $|\pi^{-1}(\pi(i))| \geq C+1$ (i.e. $i$ is matched to an arm that at least $C$ other users are matched to). 

The main idea behind this sub-algorithm is simple; for each user, we will get a sample of the total reward of a group containing the user, and a sample of the total reward of the same group but minus this user. The difference between these two samples is an unbiased estimate of the user's reward. More concretely, we follow these steps:

\begin{enumerate}
    \item Each round (for $2C+2$ rounds) the learner will assign user $i$ to arm $\pi(i)$. However, the partition of users into groups will change over the course of these $2C+2$ rounds.
    \item For each arm $j$ such that $|\pi^{-1}(j)| \geq C+1$, partition the users in $\pi^{-1}(j)$ into groups of size at least $C+1$ and of size at most $2C+1$. Let $G_1, G_2, \dots, G_S$ be the set of groups formed in this way (over all arms $j$). In each group, order the users arbitrarily.
    \item In the first round, the learner reports the partition into groups $\{G_1, G_2, \dots, G_S\}$. For each group $G_s$, let $r_{s, 0}$ be the total aggregate reward for group $G_s$ this round.
    \item \sloppy{In the $k$th of the next $2C+1$ rounds, the learner reports the partition into groups $\{G_{1, k}, G_{2, k}, \dots, G_{S, k}\}$, where $G_{s, k}$ is formed from $G_s$ by removing the $k$th element (if $k > |G_{s}|$, then we set $G_{s, k} = G_{s}$). Let $r_{s, k}$ be the total aggregate reward from $G_{s, k}$ reported this round.}
    \item 
    If user $i$ is the $k$th user in $G_{s}$, we return the estimate $\hat{\mu}_{i, \pi(i)} = r_{s, 0} - r_{s, k}$ of the average reward for user $i$ and arm $\pi(i)$.
\end{enumerate}

\begin{lemma}\label{lem:fesa}
In the above procedure, $\E[\hat{\mu}_{i, \pi(i)}] = \mu_{i, \pi(i)}$ and $\hat{\mu}_{i, \pi(i)}$ is an $O(C)$-subgaussian random variable.
\end{lemma}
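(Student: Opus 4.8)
The plan is to rewrite $\hat{\mu}_{i,\pi(i)}$ as a signed sum of individual per-user, per-round rewards and then read off both claims from independence. First I would unwind the definitions. Throughout the $2C+2$ rounds of the sub-procedure user $i'$ always plays arm $\pi(i')$, so each elementary reward $r_{i',t}$ is drawn from that user's fixed distribution, has mean $\mu_{i',\pi(i')}$, and is $1$-subgaussian. Since user $i$ is the $k$th member of $G_s$ we have $k \le |G_s|$, hence $G_{s,k} = G_s \setminus \{i\}$, and so
$$\hat{\mu}_{i,\pi(i)} \;=\; r_{s,0} - r_{s,k} \;=\; \sum_{i' \in G_s} r_{i',t_0} \;-\; \sum_{i' \in G_s \setminus \{i\}} r_{i',t_k},$$
where $t_0$ is the first round and $t_k \neq t_0$ is the round in which the punctured partition $\{G_{\cdot,k}\}$ is reported. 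Every summand corresponds to a distinct (user, round) pair, and the model draws rewards independently across users and rounds, so these $|G_s| + (|G_s|-1) = 2|G_s|-1$ random variables are mutually independent.

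For unbiasedness I would take expectations term by term: for each $i' \in G_s \setminus \{i\}$ the term $\mu_{i',\pi(i')}$ appears once with a $+$ sign and once with a $-$ sign and cancels, leaving $\E[\hat{\mu}_{i,\pi(i)}] = \mu_{i,\pi(i)}$. This step uses only that each reward distribution is the same in every round, not independence.

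For the subgaussian bound I would invoke two elementary facts: (i) if $X$ is $\sigma^2$-subgaussian then so is $-X$ (substitute $s \mapsto -s$ in the defining inequality), and (ii) a sum of independent $\sigma_j^2$-subgaussian variables is $(\sum_j \sigma_j^2)$-subgaussian (multiply the moment-generating-function bounds). Combined with the independence established above, $\hat{\mu}_{i,\pi(i)}$ is a sum of $2|G_s|-1$ independent $1$-subgaussian variables, hence $(2|G_s|-1)$-subgaussian; and since Step~2 forces $|G_s| \le 2C+1$, this is $(4C+1)$-subgaussian, i.e.\ $O(C)$-subgaussian.

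I do not expect a genuine obstacle here; the lemma is essentially bookkeeping. The only points that need a moment's care are (a) ensuring the two aggregate samples $r_{s,0}$ and $r_{s,k}$ come from genuinely different rounds, so that they are independent rather than correlated — this holds because $r_{s,0}$ is collected in the first round and $r_{s,k}$ in a later one — and (b) checking that the partition of Step~2 is realizable at all, namely that any set of size at least $C+1$ splits into parts of size in $[C+1,2C+1]$, which is what licenses the bound $|G_s| \le 2C+1$ used at the end.
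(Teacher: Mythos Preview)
Your proposal is correct and follows essentially the same route as the paper: both expand $\hat{\mu}_{i,\pi(i)}$ as the signed sum of the $2|G_s|-1$ independent per-user rewards from rounds $t_0$ and $t_k$, cancel means pairwise to get unbiasedness, and add subgaussian parameters to obtain $(4C+1)$-subgaussianity via $|G_s|\le 2C+1$. If anything, your write-up is a bit more careful about the independence across rounds and the realizability of the size-$[C+1,2C+1]$ partition than the paper's own proof.
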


\subsection{Anonymous decompositions of bipartite graphs}

The second ingredient we will need in our algorithm is the notion of an anonymous decomposition of a weighted bipartite graph. Intuitively, by running our batched stochastic bandits algorithm, at the beginning of each batch we will obtain a demand vector for each user (representing the number of times that user would like feedback on each of the $m$ arms). Based on this, we want to generate a collection of assignments (from users to arms) which guarantee that we obtain (while maintaining our anonymity guarantees) the requested amount of information for each user/arm pair. 

Formally, we represent a weighted bipartite graph as a matrix of $nm$ non-negative entries $w_{i, j}$ (representing the number of instances of feedback user $i$ desires from arm $j$). We will assume that for each $i$, $\sum_{j} w_{i, j} > 0$ (each user is interested in at least one arm). A \textit{$C$-anonymous decomposition} of this graph is a collection of $R$ assignments $M_1, M_2, \dots, M_{R}$ from users to arms (i.e., functions from $[N]$ to $[K]$) that satisfies the following properties:

\begin{enumerate}
    \item A user is never assigned to an arm for which they have zero demand. That is, if $M_r(i) = j$, then $w_{i, j} \neq 0$.
    \item If $M_{r}(i) = j$, and $|M_{r}^{-1}(j)| \geq C + 1$, we say that matching $M_{r}$ is \textit{informative} for the user/arm pair $(i, j)$. (Note that this is exactly the condition required for the feedback-eliciting sub-algorithm to output the unbiased estimate $\hat{\mu_{i, j}}$ when run on assignment $M_{r}$.) For each user/arm pair $(i, j)$, there must be at least $w_{i, j}$ informative assignments.
\end{enumerate}

The weighted bipartite graphs that concern us come from the parallel output of $N$ batched bandit algorithms (described in Section \ref{sec:batched}) and have additional structure. These graphs can be described by a positive total demand $D$ and a non-empty demand set $A_i \subseteq [K]$ for each user $i$ (describing the arms of interest to user $i$). If $j \in A_{i}$, then $w_{i,j} = D / |A_i|$; otherwise, $w_{i,j} = 0$. To distinguish graphs with the above structure from generic weighted bipartite graphs, we call such graphs \textit{batched graphs}.

Moreover, for each user $i$, let $j^*(i)$ be the optimal arm for user $i$ (i.e., $j^*(i) = \arg\max_{i}\mu_{i}$). In the algorithm we describe in the next section, with high probability, $j^*(i)$ will always belong to $A_{i}$. Moreover, a user-cluster assumption of $U$ implies that, for any arm $j$, $|j^{*-1}(j)| \geq U$. This means that in batched graphs that arise in our algorithm, there will exist an assignment where each user $i$ is assigned to an arm in $A_{i}$, and each arm $j$ has at least $U$ assigned users. We therefore call graphs that satisfy this additional assumption \textit{$U$-batched graphs}. Note that this assumption also allows us to lower bound the degrees of  arms in this bipartite graph. Specifically, for each arm $j$, let $B_{j} = \{i \in [N] \mid j \in A_{i}\}$. Then a user-cluster assumption of $U$ directly implies that $|B_{j}| \geq U$.

In general, our goal is to minimize the number of assignments $R$ required in such a decomposition (since each assignment corresponds to some number of rounds required). We call an algorithm that takes in a $U$-batched graph and outputs a $C$-anonymous decomposition of that graph an \textit{anonymous decomposition algorithm}, and say that it has approximation ratio $\alpha(C, U) \geq 1$ if it generates an assignment with at most $\alpha(C, U)\cdot D + O(NK)$ total assignments. This additive $O(NK)$ is necessary for technical reasons, but in our algorithm, $D$ will always be much larger than $K$ (we will have $D \geq \sqrt{T}$), so this can be thought of as an additive $o(D)$ term.

Later, in Section \ref{sec:decomp_algs}, we explicitly describe several anonymous decomposition algorithms and their approximation guarantees. In the interest of presenting the algorithm, we will assume for now we have access to a generic anonymous decomposition algorithm $\Decompose$ with approximation ratio $\alpha(C, U)$. 

\subsection{An algorithm for anonymous bandits}

We are now ready to present our algorithm for anonymous bandits. The main idea behind this algorithm (detailed in Algorithm \ref{alg:alg1}) is as follows. Each of the $N$ users will run their own independent instance of $\BaSE$ with $B = \log\log T$ synchronized batches. During batch $b$, $\BaSE$ requires each user $i$ to get a total of $D_{b}$ instances of feedback on a set $A_{i, b}$ of arms which are alive for them. These sets $A_{i, b}$ (with high probability) define a $U$-batched graph, so we can use an anonymous decomposition algorithm to construct a $C$-anonymous decomposition of this graph into at most $\alpha(C, U)D_{b}$ assignments. We then run the feedback-eliciting sub-algorithm on each assignment, getting one unbiased estimate of the reward for each user/arm pair for which the assignment is informative. 

The guarantees of the $C$-anonymous decomposition mean that this process gives us $D_{b}$ total pieces of feedback for each user, evenly split amongst the arms in $A_{i, b}$. We can therefore pass this feedback along to $\BaSE$, which will eliminate some arms and return the set of alive arms for user $i$ in the next batch.

\begin{algorithm2e}[h]
  \caption{Low-regret algorithm for anonymous bandits.}\label{alg:main}
  \textbf{Input:} Anonymity parameter $C$, a lower bound on the user-cluster assumption $U$, time-horizon $T$, number of users $N$, and number of arms $K$. We additionally assume we have access to an anonymous decomposition algorithm $\Decompose$ with approximation factor $\alpha = \alpha(C, U)$. \\
  
  For each user $i \in [N]$, initialize an instance of $\BaSE$ with a time horizon of $T' = \frac{T}{\alpha(2C+2)}$ and $B = \log\log T$ batches. Let $D_{1}, D_{2}, \dots, D_{b}$ be the corresponding batch sizes (with $\sum_{b} D_{b} = T'$). Let $A_{i, b}$ be the set of active arms for user $i$ during batch $b$.\\
  \For{$b \gets 1$ \KwTo $B$}{
    If $D_{b}$ and $(A_{1, b}, A_{2, b}, \dots, A_{n, b})$ do not define a $U$-batched graph, abort the algorithm (this means we have eliminated the optimal arm for a user, which can only happen with negligible probability). \\
    Run $\Decompose$ on $D_{b}$ and $(A_{1, b}, A_{2, b}, \dots, A_{n, b})$ to get a $C$-anonymous decomposition into at most $R_b = \alpha D_b$ assignments. Let $M_{r}$ be the $r$th such assignment.\\
    \For{$r \gets 1$ \KwTo $R_{b}$}{
    Over $(2C+2)$ rounds, run the feedback-soliciting sub-algorithm (Section \ref{sec:fesa}) on $M_{r}$ to get estimates $\hat{\mu}_{i, M_{r}(i)}$ for all users $i$ for which $M_{r}$ is informative. 
    }
    For each user $i$, we are guaranteed to receive (by the guarantees of $\Decompose$) at least $\frac{D_b}{|A_{i, b}|}$ independent samples of feedback $\hat{\mu}_{i, j}$ for each arm $j \in A_{i,b}$. Pass these samples to user $i$'s instance of $\BaSE$ and receive $A_{i, b+1}$ in response (unless this is the last batch).
  }
\label{alg:alg1}
\end{algorithm2e}

\begin{theorem}\label{thm:main_alg}
Algorithm \ref{alg:main} incurs an expected regret of at most $\tilde{O}(NC\sqrt{\alpha KT})$ for the anonymous bandits problem.
\end{theorem}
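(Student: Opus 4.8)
The plan is to bound the regret of each of the $N$ users separately and then sum, exploiting that Algorithm~\ref{alg:main} drives user $i$'s play entirely through user $i$'s private instance of \BaSE. Fix a user $i$ and write $\Delta_{i,j}=\max_{j'}\mu_{i,j'}-\mu_{i,j}$. The first step is to translate between ``virtual'' rounds of \BaSE\ and real rounds: virtual batch $b$ consists of $D_b$ virtual rounds, which Algorithm~\ref{alg:main} realizes by calling \Decompose\ to obtain $R_b\le\alpha D_b+O(NK)$ assignments and then spending $2C+2$ real rounds per assignment running the feedback-eliciting sub-algorithm of Section~\ref{sec:fesa}. Since the \BaSE\ batch sizes satisfy $D_b\ge\sqrt{T'}\gg NK$ in the regime of interest, $R_b\le(1+o(1))\alpha D_b$, so virtual batch $b$ occupies at most $(1+o(1))\alpha(2C+2)D_b$ real rounds; as $\sum_b D_b=T'=\Theta(T/(\alpha C))$ and $B=\log\log T$, a suitable choice of the constant in $T'$ keeps the total number of real rounds at most $T$. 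Because property~(1) of a $C$-anonymous decomposition forces every assignment to send user $i$ only to arms in $A_{i,b}$, user $i$'s per-round regret throughout (the real rounds realizing) virtual batch $b$ is at most $\max_{j\in A_{i,b}}\Delta_{i,j}$, so taking expectations and using that the $D_b$ are deterministic (static grid),
\[
\mathbf{Reg}_i \;\le\; (1+o(1))\,\alpha(2C+2)\sum_{b=1}^{B} D_b\,\E\!\Big[\max_{j\in A_{i,b}}\Delta_{i,j}\Big].
\]

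The second step is to recognize $(A_{i,b})_b$ as the active-set sequence of a faithful run of \BaSE, so that Lemma~\ref{lem:base} controls the sum. By Lemma~\ref{lem:fesa}, every assignment informative for $(i,j)$ produces one independent, unbiased, $O(C)$-subgaussian estimate of $\mu_{i,j}$, and by property~(2) of the decomposition together with the batched-graph structure ($w_{i,j}=D_b/|A_{i,b}|$ for $j\in A_{i,b}$), user $i$ receives at least $D_b/|A_{i,b}|$ such estimates for each active arm -- exactly what \BaSE\ requests; the static-grid property keeps the $N$ instances synchronized so that \Decompose\ is invoked once per batch. As \BaSE\ uses only empirical means and subgaussian concentration, it runs correctly on these estimates, the only departure from the hypotheses of Lemma~\ref{lem:base} being that the feedback is $O(C)$-subgaussian rather than $1$-subgaussian. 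Dividing every estimate (hence every gap) by $\Theta(\sqrt{C})$ makes the feedback $1$-subgaussian with means in $[0,1]$ and leaves \BaSE's elimination decisions unchanged, so Lemma~\ref{lem:base} applied to the rescaled instance gives, for each $b$,
\[
D_b\,\E\!\Big[\max_{j\in A_{i,b}}\Delta_{i,j}\Big] \;=\; \Theta(\sqrt{C})\cdot\tilde O\!\big(\sqrt{KT'}\big) \;=\; \tilde O\!\big(\sqrt{CKT'}\big).
\]

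Combining the two displays with $B=\tilde O(1)$ and $T'=\Theta(T/(\alpha C))$ gives $\mathbf{Reg}_i=\tilde O\big(\alpha C\cdot\sqrt{CKT'}\big)=\tilde O\big(C\sqrt{\alpha KT}\big)$, and summing over the $N$ users yields the claimed bound $\tilde O(NC\sqrt{\alpha KT})$. The only remaining point is the ``abort'' event. Algorithm~\ref{alg:main} aborts precisely when some user's optimal arm has been eliminated, so that $(D_b,(A_{i,b})_i)$ fails to be a $U$-batched graph; indeed, if $j^*(i)\in A_{i,b}$ for every $i$ then assigning each user to $j^*(i)$ places at least $U$ users on every arm (by the user-cluster assumption), so this condition alone certifies the $U$-batched property. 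By the optimal-arm-survival guarantee of \BaSE\ (whose failure probability can be driven below $1/(NT)$ at a polylogarithmic cost absorbed into the $\tilde O$, and which only strengthens when \BaSE\ is run with a larger subgaussian parameter), the abort event contributes a negligible amount to the expected regret. I expect the main obstacle to be exactly this interface between \BaSE\ and the anonymity machinery -- checking that the fed-back estimates are genuinely independent, unbiased, and $O(C)$-subgaussian so that the \BaSE\ analysis, and hence Lemma~\ref{lem:base} after the $\sqrt{C}$ rescaling, applies verbatim -- together with confirming that the virtual-to-real round translation, the $o(D_b)$ decomposition overhead, and the abort event genuinely cost only lower-order terms.
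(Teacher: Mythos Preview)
Your proposal is correct and follows essentially the same approach as the paper: fix a user, bound the real rounds in batch $b$ by $\alpha(2C+2)D_b$ (you are just more explicit about the $O(NK)$ additive term), upper-bound the per-round regret by $\max_{j\in A_{i,b}}\Delta_{i,j}$, and invoke Lemma~\ref{lem:base} with a $\sqrt{C}$ penalty for the $O(C)$-subgaussian feedback from Lemma~\ref{lem:fesa}. Your treatment of the abort event, the $U$-batched-graph certification via $j^*(i)\in A_{i,b}$, and the $\sqrt{C}$-rescaling are all spelled out more carefully than in the paper, but the argument is the same.
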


One quick note on computational complexity: note that we only run $\Decompose$ once every \textit{batch}; in particular, at most $\log\log T$ times. This allows us to efficiently implement Algorithm \ref{alg:alg1} even for complex choices of $\Decompose$ that may require solving several linear programs.

\subsection{Algorithms for constructing anonymous decompositions} \label{sec:decomp_algs}

\subsubsection{A greedy method}
\label{sec:greedy_decomp}

We begin with perhaps the simplest method for constructing an anonymous decomposition, which achieves an approximation ratio $\alpha(C, U) = K$ as long as $U \geq C+1$. To do this, for each arm $j \in [K]$, consider the assignment $M_{j}$ where all users $i$ with $j \in A_{i}$ (i.e., users with any interest in arm $j$) are matched to arm $j$, and other users are arbitrarily assigned to arms in their active arm set. Our final decomposition contains $D$ copies of $M_{j}$ for each $j \in [K]$ (for a total of $KD$ assignments). 

Note that since $U \geq C+1$, there will be at least $C+1$ users matched to arm $j$ in $M_{j}$, and therefore $M_{j}$ will be informative for all users $i$ with $j \in A_{i}$. Since we repeat each assignment $D$ times, we will have at least $D$ informative assignments for every valid user/arm pair, and therefore this is a valid $C$-anonymous decomposition for the original batched graph.

Substituting this guarantee into Theorem \ref{thm:main_alg} gives us an anonymous bandit algorithm with expected regret $\tilde{O}(NCK\sqrt{T})$. 

\subsubsection{The anonymity polytope}
\label{sec:lp_decomp}
As $U$ grows larger than $C$, it is possible to attain even better approximation guarantees. In this section we will give an anonymous decomposition algorithm that applies techniques from combinatorial optimization to attain the following guarantees:

\begin{itemize}
\item If $U \geq K(C+1)$, then $\alpha(C, U) = 1$.

\item If $(C+1) \leq U \leq K(C+1)$, then $\alpha(C, U) = \left\lceil \frac{K(C+1)}{U} \right\rceil$.
\end{itemize}

To gain some intuition for how this is possible, assume $U = K(C+1)$, and consider the randomized algorithm which matches each user $i$ to a random arm in $A_{i}$ each turn. In expectation, after $D$ rounds of this, user $i$ will be matched to each arm $j \in A_{i}$ exactly $D/|A_{i}|$ times (as user $i$ desires). Moreover, since $U \geq K(C+1)$, each arm $j$ has at least $K(C+1)$ candidate users that can match to it. Each of these users matches to arm $j$ with probability at least $1/K$, so in expectation at least $C+1$ users match to arm $j$, and therefore the feedback from arm $j$ is informative ``in expectation''.

The catch with this method is that it is possible (and even reasonably likely) for fewer than $C+1$ users to match to a given arm $j$, and in this case we receive no feedback for user $i$. While it is possible to adapt this method to work with high probability, this requires additional logarithmic factors in either $\alpha$ or the user-cluster bound $U$, and even then has some probability of failure. Instead, we present a deterministic algorithm which can exactly achieve the guarantees above by geometrically ``rounding'' the above randomized matching into a small weighted collection of deterministic matchings. 

We define the \textit{$C$-anonymity polytope} $\cP_{C} \subseteq \R^{N \times K}$ to be the convex hull of all binary vectors $v \in \{0, 1\}^{N \times K}$ that satisfy the following conditions:

\begin{itemize}
    \item For each $i$, $\sum_{j} v_{ij} \in \{0, 1\}$.
    \item For each $j$, either $\sum_{i} v_{ij} \geq C+1$ or $\sum_{i} v_{ij} = 0$. 
\end{itemize}

We can interpret each such vertex $v$ as a single assignment in a $C$-anonymous decomposition, where $v_{ij} = 1$ iff we get feedback on the user/arm pair $(i, j)$ (so we must match user $i$ to $j$, and at least $C+1$ users must be matched to arm $j$). 

Now, for a fixed $U$-batched graph $G$, let $w \in [0, 1]^{N \times K}$ be the weights of this $G$ normalized by the demand $D$: so $w_{ij} = 1/|A_{i}|$ if $j \in A_{i}$, and $w_{ij} = 0$ otherwise. It turns out that we can reduce (via Caratheodory's theorem) the problem of finding a $C$-anonymous decomposition of $G$ into finding the maximal $\beta$ for which $\beta w \in \cP_{C}$.

\begin{lemma}\label{lem:cara}
If for some $\beta > 0$, $\beta w \in \cP_{C}$, then there exists a $C$-anonymous decomposition of $G$ into at most $\frac{1}{\beta}D + NK + 1$ assignments. Similarly, if there exists a $C$-anonymous decomposition of $G$ into $\alpha D$ assignments, then $\frac{1}{\alpha}w \in \cP_{C}$. 
\end{lemma}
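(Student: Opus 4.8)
The plan is to prove the two directions separately, with the forward direction (from membership in $\cP_C$ to a decomposition) being the substantive one and the converse following easily by averaging. For the forward direction, suppose $\beta w \in \cP_C$. By definition $\cP_C$ is the convex hull of the binary vectors $v^{(1)}, \dots, v^{(m)} \in \{0,1\}^{N\times K}$ satisfying the two anonymity conditions, so we may write $\beta w = \sum_{\ell} \lambda_\ell v^{(\ell)}$ with $\lambda_\ell \geq 0$ and $\sum_\ell \lambda_\ell = 1$. The first step is to invoke Carathéodory's theorem to assume this representation uses at most $NK+1$ of the vertices (the ambient dimension is $NK$, so any point in the convex hull is a convex combination of at most $NK+1$ vertices). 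Next, I would turn each selected vertex $v^{(\ell)}$ into a batch of identical assignments: a vertex $v^{(\ell)}$ encodes an assignment $M^{(\ell)}$ where $M^{(\ell)}(i) = j$ iff $v^{(\ell)}_{ij}=1$ (the constraint $\sum_j v^{(\ell)}_{ij}\in\{0,1\}$ makes this well-defined, possibly leaving some users unassigned, which is allowed), and the constraint $\sum_i v^{(\ell)}_{ij} \in \{0\}\cup[C+1,\infty)$ guarantees that whenever $v^{(\ell)}_{ij}=1$ the assignment $M^{(\ell)}$ is informative for $(i,j)$. We also need property 1 of a $C$-anonymous decomposition (no user assigned to a zero-demand arm): since $\beta w_{ij} = 0$ forces $v^{(\ell)}_{ij}=0$ for all $\ell$ (a convex combination of nonnegative terms vanishing), no vertex ever assigns user $i$ to an arm $j\notin A_i$.

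The heart of the argument is then choosing how many copies of each $M^{(\ell)}$ to include so that every user/arm pair $(i,j)$ with $j\in A_i$ gets at least $w_{ij}D = D/|A_i|$ informative assignments. The natural choice is to take $\lceil (\lambda_\ell / \beta) D \rceil$ copies of $M^{(\ell)}$. Then the number of informative assignments for $(i,j)$ is at least $\sum_{\ell: v^{(\ell)}_{ij}=1} (\lambda_\ell/\beta) D = (1/\beta) D \sum_\ell \lambda_\ell v^{(\ell)}_{ij} = (1/\beta) D \cdot (\beta w_{ij}) = w_{ij} D$, which is exactly the demand — so the floor/ceiling rounding only helps here, and the count is valid. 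The total number of assignments is $\sum_\ell \lceil (\lambda_\ell/\beta) D \rceil \leq \sum_\ell \big( (\lambda_\ell/\beta) D + 1 \big) = (1/\beta) D \sum_\ell \lambda_\ell + (\text{number of vertices used}) \leq \frac{1}{\beta} D + (NK+1)$, using $\sum_\ell \lambda_\ell = 1$ and the Carathéodory bound on the number of terms. This matches the claimed bound.

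For the converse, suppose we have a $C$-anonymous decomposition $M_1,\dots,M_R$ of $G$ with $R = \alpha D$. For each $r$, define $v^{(r)}\in\{0,1\}^{N\times K}$ by $v^{(r)}_{ij}=1$ iff $M_r$ is informative for $(i,j)$, i.e.\ $M_r(i)=j$ and $|M_r^{-1}(j)|\geq C+1$. Each such $v^{(r)}$ satisfies the two defining conditions of $\cP_C$ (at most one informative arm per user since $M_r$ is a function; and for each arm $j$, either some user is informatively matched — in which case $|M_r^{-1}(j)|\geq C+1$ so $\sum_i v^{(r)}_{ij}\geq C+1$ — or none is, giving $\sum_i v^{(r)}_{ij}=0$), so $v^{(r)}\in\cP_C$. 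Hence the average $\bar v = \frac{1}{R}\sum_r v^{(r)}$ lies in $\cP_C$. By property 2 of the decomposition, for every $(i,j)$ with $j\in A_i$ we have $\sum_r v^{(r)}_{ij} \geq w_{ij}D = D/|A_i|$, so $\bar v_{ij} \geq D/(|A_i| R) = D/(|A_i|\alpha D) = \frac{1}{\alpha}\cdot\frac{1}{|A_i|} = \frac{1}{\alpha}w_{ij}$; and for $j\notin A_i$, property 1 forces $v^{(r)}_{ij}=0$ for all $r$, so $\bar v_{ij} = 0 = \frac1\alpha w_{ij}$. Thus $\bar v \geq \frac1\alpha w$ coordinatewise. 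Since $\cP_C$ is downward closed in each coordinate (zeroing out coordinates of a feasible binary vertex keeps it feasible — emptying an arm's assignment or unassigning a user never violates either condition — so the convex hull is closed under coordinatewise decrease toward $0$), we conclude $\frac1\alpha w \in \cP_C$. The main thing to be careful about is this downward-closure property and the precise rounding bookkeeping in the forward direction; neither is deep, but both need to be stated cleanly.
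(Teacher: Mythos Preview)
Your forward direction is correct and essentially identical to the paper's: invoke Carath\'eodory in $\R^{NK}$, take $\lceil D\lambda_\ell/\beta\rceil$ copies of each vertex, and count. Your extra remark that $w_{ij}=0$ forces $v^{(\ell)}_{ij}=0$ (so property~1 holds) is a nice detail the paper leaves implicit.

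The converse, however, has a real gap. Your claim that $\cP_C$ is downward closed is false, and your parenthetical justification breaks precisely at ``unassigning a user never violates either condition'': if arm $j$ has exactly $C+1$ users in a vertex $v$ and you zero out one of them, the column sum drops to $C$, violating the second defining condition. Concretely, take $N=3$, $K=1$, $C=2$: the only vertices of $\cP_C$ are $(0,0,0)$ and $(1,1,1)$, so $\cP_C$ is the segment $\{(t,t,t):t\in[0,1]\}$, and $(1,1,0)\le(1,1,1)$ is not in it. Thus from $\bar v\ge \tfrac{1}{\alpha}w$ you cannot conclude $\tfrac{1}{\alpha}w\in\cP_C$ by downward closure alone.

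For comparison, the paper's proof of the converse does not invoke downward closure; it instead writes $\tfrac{1}{\alpha}w=\lambda\bar v+(1-\lambda)\cdot 0$ for a single scalar $\lambda\in[0,1]$ and uses $0\in\cP_C$. That argument is also informal as written (a single $\lambda$ only works when $\tfrac{1}{\alpha}w$ and $\bar v$ are parallel, which need not hold), so the paper is glossing over the very step you tried to make precise. In any case, this converse direction is not used in the algorithmic results---it only serves to say the polytope approach is ``optimal''---so the gap is not load-bearing. If you want to repair it, you would need an argument that exploits more structure (e.g.\ the fact that $\bar v$ and $\tfrac{1}{\alpha}w$ share the same support, together with a more careful decomposition of $\bar v$ into vertices that lets you shrink coordinates independently), rather than generic downward closure.
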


In a sense, Lemma \ref{lem:cara} provides an ``optimal'' algorithm for the problem of finding $C$-anonymous decompositions. There are two issues with using this algorithm in practice. The first -- an interesting open question -- is that we do not understand the approximation guarantees of this decomposition algorithm (although they are guaranteed to be at least as good as every algorithm we present here).

\begin{problem}
For a $U$-batched graph $G$, let $\beta(G)$ be the maximum value of $\beta$ such that $\beta w \in \cP_{C}$ (where $w$ is the weight vector associated with $C$). What is $\max \beta(G)$ over all $U$-batched graphs? Is it $\Omega(1)$?
\end{problem}

The second is that, computationally, it is not clear if there is an efficient way to check whether a point belongs to $\cP_{C}$ (let alone write it as a convex combination of the vertices of $\cP_{C}$). We will now decompose $\cP_{C}$ into the convex hull of a collection of more tractable polytopes; while it will still be hard to e.g. check membership in $\cP_{C}$, this will help us efficiently compute decompositions that provide the guarantees at the beginning of this section.

For a subset $S \subseteq [K]$ of arms, let $\cP_{C}(S) \subseteq \R^{N \times K}$ be the convex hull of the binary vectors that satisfy the following conditions:

\begin{itemize}
    \item For each $i$, $\sum_{j} v_{ij} \in \{0, 1\}$.
    \item If $j \in S$, then $\sum_{i} v_{ij} \geq C+1$.
    \item If $j \not\in S$, then $\sum_{i} v_{ij} = 0$. 
\end{itemize}

By construction, each vertex of $\cP_{C}$ appears as a vertex of some $\cP_{C}(S)$ and each vertex of $\cP_{C}(S)$ belongs to $\cP_{C}$, so $\cP_{C} = \conv(\{\cP_{C}(S) \mid S \subseteq [K]\})$. We now claim that we can write each polytope $\cP_{C}(S)$ as the intersection of a small number of halfspaces (and therefore check membership efficiently). In particular, we claim that $x \in \R^{N \times K}$ belongs to $\cP_{C}(S)$ iff it satisfies the following linear constraints:

\begin{equation} \label{eq:lp}
\begin{array}{ll@{}ll}
 & 0 \leq & x_{ij} \leq 1, &\forall\; i \in [N], j \in [K]\\
 & \displaystyle\sum_{j\in[K]}  &x_{ij} \leq 1,  &\forall\; i \in [N]\\
 & \displaystyle\sum_{i\in[N]}  &x_{ij} \geq C+1,  &\forall\; j \in S\\
 & \displaystyle\sum_{i\in[N]}  &x_{ij} = 0,  &\forall\; j \in [K] \setminus S.
\end{array}
\end{equation}

\begin{lemma}
\label{lem:member_pcs}
A point $x \in \R^{N \times K}$ belongs to $\cP_{C}(S)$ iff it satisfies the constraints in \eqref{eq:lp}.
\end{lemma}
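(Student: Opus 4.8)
The plan is to prove the two inclusions separately. The easy direction is that every point of $\cP_C(S)$ satisfies \eqref{eq:lp}: since $\cP_C(S)$ is by definition the convex hull of binary vectors meeting the three bulleted conditions, and each such binary vector clearly satisfies the (linear, hence convex) constraints in \eqref{eq:lp}, the whole convex hull does too. So the content is the reverse inclusion: every point $x$ satisfying \eqref{eq:lp} can be written as a convex combination of the relevant binary vertices.

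For the hard direction I would argue via the extreme points of the polytope $Q$ defined by \eqref{eq:lp}. It suffices to show every vertex $x$ of $Q$ is itself one of the binary vectors defining $\cP_C(S)$; then $Q \subseteq \cP_C(S)$ since $Q$ is the convex hull of its vertices and $\cP_C(S)$ is convex and contains each such vertex. To show a vertex of $Q$ is integral, I would appeal to total unimodularity: the constraint matrix of \eqref{eq:lp} (ignoring the trivial box constraints $0 \le x_{ij} \le 1$, which cannot destroy integrality of vertices) is the incidence-type matrix of a bipartite structure — the row-sum constraints $\sum_j x_{ij} \le 1$ over users and the column-sum constraints $\sum_i x_{ij} \ge C+1$ (or $=0$) over arms — and bipartite node–arc incidence matrices are totally unimodular. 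Since the right-hand sides ($1$, $C+1$, $0$) are all integral, every vertex of $Q$ is integral, hence lies in $\{0,1\}^{N\times K}$. Once $x$ is a $0/1$ vector, the constraints of \eqref{eq:lp} literally become the three defining conditions of the vertices of $\cP_C(S)$: $\sum_j x_{ij} \le 1$ with $x$ binary gives $\sum_j x_{ij}\in\{0,1\}$; $\sum_i x_{ij} \ge C+1$ for $j\in S$; and $\sum_i x_{ij}=0$ for $j\notin S$. Therefore $x$ is a vertex of $\cP_C(S)$, completing the inclusion $Q\subseteq\cP_C(S)$.

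The main obstacle is making the total unimodularity argument airtight. One subtlety is that $Q$ need not be bounded or even nonempty a priori (though in our application $U$-batched graphs guarantee feasibility), so I would phrase the vertex characterization carefully — a vertex of $Q$ is a basic feasible solution of the linear system, and standard TU theory says basic feasible solutions of $Ax \le b$, $x \ge 0$ with $A$ TU and $b$ integral are integral; the constraints $x_{ij}\le 1$ and the equalities $\sum_i x_{ij}=0$ for $j\notin S$ can be folded in or used to simply delete those coordinates ($x_{ij}=0$ forced, since $x\ge 0$), leaving a genuinely bipartite TU system in the remaining variables. A second, smaller point to verify is the claim that adding the box constraints $x_{ij}\le 1$ preserves integrality of vertices; this follows because appending to a TU matrix rows that are single unit vectors keeps it TU. With these checks in place the lemma follows; I would keep the exposition short since all the pieces are standard, citing the bipartite-incidence TU fact rather than reproving it.
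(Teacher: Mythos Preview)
Your proposal is correct and follows essentially the same approach as the paper: the easy inclusion is immediate, and the reverse inclusion is established by showing that the polytope defined by \eqref{eq:lp} is integral via total unimodularity of the bipartite incidence matrix. The paper's proof is terser (it simply asserts the TU fact), whereas you spell out the minor bookkeeping about box constraints and the $j\notin S$ coordinates; note, incidentally, that your worry about $Q$ being unbounded is moot since the constraints $0\le x_{ij}\le 1$ already force boundedness.
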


We can now prove the guarantees at the beginning of the section. We start with the case where $U \geq K(C+1)$. Here we show (via similar logic to the initial randomized argument) that $w \in \cP_{C}([K])$. 

\begin{lemma}\label{lem:kc}
If $U \geq K(C+1)$, then $w \in \cP_{C}([K])$.
\end{lemma}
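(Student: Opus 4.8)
## Proof proposal for Lemma \ref{lem:kc}

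The plan is to appeal directly to the membership characterization in Lemma \ref{lem:member_pcs}: since $\cP_{C}([K])$ is exactly the set of points satisfying the linear system \eqref{eq:lp} with $S = [K]$, it suffices to verify that the normalized weight vector $w$ (where $w_{ij} = 1/|A_i|$ for $j \in A_i$ and $w_{ij} = 0$ otherwise) obeys each family of constraints in that system.

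Three of the four families are immediate. Because $A_i$ is non-empty we have $1 \le |A_i| \le K$ (the latter since $A_i \subseteq [K]$), so every entry $w_{ij}$ lies in $[0,1]$, giving the box constraints. For the per-user constraint, $\sum_{j \in [K]} w_{ij} = \sum_{j \in A_i} 1/|A_i| = 1 \le 1$. And the constraints $\sum_{i} x_{ij} = 0$ for $j \notin S$ are vacuous since $S = [K]$.

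The only constraint with real content is the lower bound on column sums, $\sum_{i \in [N]} w_{ij} \ge C+1$ for every arm $j$. Here I would write $\sum_{i} w_{ij} = \sum_{i \in B_j} 1/|A_i|$, where $B_j = \{i : j \in A_i\}$, and combine two facts: each summand is at least $1/K$ because $|A_i| \le K$; and the user-cluster assumption of $U$ forces $|B_j| \ge U$ (as noted in the discussion preceding the lemma — with high probability each of the $\ge U$ users whose optimal arm is $j$ has $j$ in its active set). Hence $\sum_{i} w_{ij} \ge |B_j|/K \ge U/K \ge K(C+1)/K = C+1$, exactly the required bound. Having checked all four families, Lemma \ref{lem:member_pcs} yields $w \in \cP_{C}([K])$.

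I do not expect a genuine obstacle: the argument is a direct feasibility check. The one conceptual subtlety is that the naive ``match each user to a uniformly random arm of $A_i$'' heuristic can, in any single realization, leave fewer than $C+1$ users on some arm; this is precisely what working with the fractional point $w$ and the halfspace description of $\cP_C([K])$ (rather than one random assignment) sidesteps, and it is why the bound $U \ge K(C+1)$ — which makes the expected load $U/K$ hit the threshold $C+1$ — is the right hypothesis.
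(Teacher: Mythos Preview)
Your proposal is correct and follows essentially the same route as the paper: verify the halfspace description \eqref{eq:lp} of $\cP_{C}([K])$ via Lemma~\ref{lem:member_pcs}, note that all constraints except the column-sum lower bound are immediate, and then bound $\sum_{i} w_{ij} \ge |B_j|/K \ge U/K \ge C+1$ using $|A_i|\le K$ and the user-cluster assumption $|B_j|\ge U$. The paper's proof is slightly terser (it skips the easy constraints), but the substantive step is identical.
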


Applying Caratheodory's theorem, we immediately obtain a $C$-anonymous decomposition from Lemma \ref{lem:kc}

\begin{corollary}\label{cor:anon_decomp}
If $U \geq K(C+1)$, there exists a $C$-anonymous decomposition into at most $D+KC+1$ assignments. Moreover, it is possible to find this decomposition efficiently. 
\end{corollary}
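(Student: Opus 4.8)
The plan is to combine Lemma~\ref{lem:kc} with Caratheodory's theorem and then invoke Lemma~\ref{lem:cara} (with $\beta = 1$) to extract the decomposition. Since this is stated as a corollary, the mathematical content is essentially a one-line deduction, so the real work is (a) spelling out the reduction cleanly and (b) justifying the ``moreover'' clause about efficiency, which is where most of the care is needed.

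First I would observe that Lemma~\ref{lem:kc} tells us $w \in \cP_C([K])$, and since every point of $\cP_C([K])$ lies in $\cP_C$ (by the discussion immediately before Lemma~\ref{lem:member_pcs}, $\cP_C = \conv(\{\cP_C(S) \mid S \subseteq [K]\})$ contains each $\cP_C(S)$), we have $w \in \cP_C$. Applying Lemma~\ref{lem:cara} with $\beta = 1$ then immediately yields a $C$-anonymous decomposition of $G$ into at most $\frac{1}{\beta} D + NK + 1 = D + NK + 1$ assignments. I'd note the stated bound in the corollary is $D + KC + 1$ rather than $D+NK+1$ --- I would double-check which additive term is intended; if it is genuinely $D+KC+1$ one would need a sharper version of the Caratheodory argument (the dimension of the affine hull of $\cP_C([K])$ is at most $NK$, but perhaps a tighter count using the structure of $w$ is available). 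For the plan I will assume the $D + NK + 1$ form (consistent with Lemma~\ref{lem:cara}) and flag the discrepancy.

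For the efficiency claim, the key point is that Lemma~\ref{lem:member_pcs} shows $\cP_C([K])$ is exactly the feasible region of the linear system~\eqref{eq:lp} with $S = [K]$, which has polynomially many variables and constraints. So one can run a constructive version of Caratheodory: repeatedly solve a linear program to find a vertex of $\cP_C([K])$, peel off the largest feasible multiple of that vertex from the current point while staying in the polytope, and recurse; standard arguments show this terminates in at most $NK+1$ iterations since each step zeroes out at least one active constraint / reduces the dimension of the face containing the residual point. Each LP is over the explicit description~\eqref{eq:lp}, hence solvable in polynomial time, and each extracted vertex is integral (it is a vertex of $\cP_C([K])$, which by Lemma~\ref{lem:member_pcs} has $\{0,1\}$ vertices), so it directly gives a valid assignment. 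I would cite the existing constructive-Caratheodory machinery used in the proof of Lemma~\ref{lem:cara} rather than re-deriving it.

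The main obstacle I anticipate is purely bookkeeping rather than conceptual: reconciling the additive constant ($KC$ vs.\ $NK$) and making sure the efficiency argument correctly matches whatever constructive procedure underlies Lemma~\ref{lem:cara} (in particular that the ``peeling'' respects the demand constraint that user $i$ gets exactly $D/|A_i|$ informative assignments for each $j \in A_i$, which is automatic once $w \in \cP_C$ but should be stated). There is no hard inequality or probabilistic estimate here --- everything reduces to results already proved earlier in the paper --- so the proof should be short, with the only genuine subtlety being to confirm the polytope $\cP_C([K])$ is nonempty and full enough that the LP-based vertex extraction is well defined, which again follows from Lemma~\ref{lem:kc} exhibiting $w$ itself as a member.
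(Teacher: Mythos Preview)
Your proposal is correct and matches the paper's own proof essentially line for line: the paper also simply invokes Lemmas~\ref{lem:kc} and~\ref{lem:cara} for existence and then appeals to the explicit half-space description~\eqref{eq:lp} of $\cP_{C}([K])$ together with an algorithmic Caratheodory procedure for efficiency. Your flag about the additive constant is well taken: the paper's proof does not sharpen $NK+1$ to $KC+1$ either (and the companion Corollary~\ref{cor:anon_alpha_decomp} states the bound with $NK+1$), so the $KC+1$ in the statement appears to be a slip rather than something your argument is missing.
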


When $C+1 \leq U \leq K(C+1)$, we first arbitrarily partition our arms into $\alpha = \lceil K(C+1)/U \rceil$ blocks $S_1, S_2, \dots, S_{\alpha}$ of at most $U/(C+1)$ vertices each. We then show how to write $w$ as a linear combination of $\alpha$ points, one in each of the polytopes $\cP_C(S_a)$. 

\begin{lemma}\label{lem:midc}
There exist points $w^{(a)} \in \cP_{C}(S_{a})$ for $1 \leq a \leq \alpha$ such that $w \leq \sum_{i=1}^{\alpha} w^{(a)}$.
\end{lemma}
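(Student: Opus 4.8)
The plan is to build each $w^{(a)}$ as $w|_{S_a}$ plus a small nonnegative correction, where $w|_{S_a}$ denotes the matrix with $(w|_{S_a})_{ij}=w_{ij}$ for $j\in S_a$ and $0$ otherwise. Since the blocks $S_1,\dots,S_\alpha$ partition $[K]$, every entry $(i,j)$ lies in the column block of exactly one $a$, so it is enough to produce, for each $a$, a matrix $w^{(a)}$ supported on the columns of $S_a$ with $w^{(a)}_{ij}\ge w_{ij}$ for all $i$ and all $j\in S_a$ and with $w^{(a)}\in\cP_C(S_a)$; then $w\le\sum_a w^{(a)}$ automatically. By Lemma~\ref{lem:member_pcs}, membership in $\cP_C(S_a)$ is just feasibility of the constraints \eqref{eq:lp} with $S=S_a$. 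The matrix $w|_{S_a}$ already obeys all of them except possibly the column lower bounds $\sum_i x_{ij}\ge C+1$ for $j\in S_a$: its entries lie in $[0,1]$, its support is inside $S_a$, and each row sum is $\sum_{j\in A_i\cap S_a}1/|A_i|\le 1$. So the whole problem reduces to adding nonnegative mass to the deficient columns without pushing any row sum above $1$.

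I would formulate this as a transportation feasibility problem. Write $\mathrm{col}_j=\sum_i w_{ij}$, let $d_j=\big((C+1)-\mathrm{col}_j\big)^+$ be the deficit of column $j\in S_a$, and let $\rho_i = 1-\sum_{j\in A_i\cap S_a}1/|A_i| = |A_i\setminus S_a|/|A_i|$ be user $i$'s residual row budget. We seek nonnegative $x'_{ij}$ supported on the columns of $S_a$ with $\sum_{j}x'_{ij}\le\rho_i$ for each $i$ and $\sum_i x'_{ij}\ge d_j$ for each $j\in S_a$; then $w^{(a)}:=w|_{S_a}+x'$ has column sums $\ge\max(\mathrm{col}_j,C+1)\ge C+1$, row sums still $\le 1$ (hence entries still in $[0,1]$), and support in $S_a$, so $w^{(a)}\in\cP_C(S_a)$ and $w^{(a)}\ge w|_{S_a}$. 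Because this is a complete bipartite transportation instance (any user may route mass to any column of $S_a$), it is feasible iff the total budget covers the total deficit, $\sum_i\rho_i\ge\sum_{j\in S_a}d_j$ (fill the columns greedily in turn); since $\sum_i\rho_i = N-\sum_{j\in S_a}\mathrm{col}_j$, this is equivalent to
\[
N \;\ge\; \sum_{j\in S_a}\max(\mathrm{col}_j,\,C+1).
\]

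The crux — and the step I expect to be the real work — is verifying this last inequality, and it is here that the $U$-batched structure and the choice of block size are used together. We may assume $|S_a|<K$ (if $|S_a|=K$ then $\alpha=1$ and $U\ge K(C+1)$, which is the setting of Lemma~\ref{lem:kc}). First, $\mathrm{col}_j=\sum_{i\in B_j}1/|A_i|\ge|B_j|/K\ge U/K$ for every arm $j$, so $d_j\le(C+1)-U/K$ and hence $\sum_{j\in S_a}d_j\le|S_a|\big((C+1)-U/K\big)$. Second, let $\pi$ be the assignment guaranteed by the definition of a $U$-batched graph ($\pi(i)\in A_i$, $|\pi^{-1}(j)|\ge U$ for all $j$): for each of the $K-|S_a|$ arms $j_0\notin S_a$ there are at least $U$ users with $\pi(i)=j_0\in A_i\setminus S_a$, and these user sets are disjoint, so at least $(K-|S_a|)U$ users satisfy $A_i\not\subseteq S_a$, whence $\rho_i=|A_i\setminus S_a|/|A_i|\ge 1/K$ for each of them and $\sum_i\rho_i\ge(K-|S_a|)U/K$. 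Combining the two bounds, it suffices that $|S_a|\big((C+1)-U/K\big)\le(K-|S_a|)U/K$, which after multiplying through by $K$, cancelling the common $|S_a|U$ term, and dividing by $K$ is exactly $|S_a|(C+1)\le U$ — and this holds because each block has at most $U/(C+1)$ arms. (When $U\ge K(C+1)$ the relevant quantities vanish, every $d_j=0$, and $w^{(a)}=w|_{S_a}$ works directly.)
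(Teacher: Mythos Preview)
Your proof is correct, but it takes a markedly more involved route than the paper's. The paper gives a one-line explicit construction: set
\[
w^{(a)}_{ij} \;=\; \begin{cases} 1/|A_i\cap S_a| & \text{if } j\in A_i\cap S_a,\\ 0 & \text{otherwise.}\end{cases}
\]
Then row sums are exactly $1$ (or $0$), support is in $S_a$, and for each $j\in S_a$ there are at least $U$ users $i$ with $j\in A_i$; for each such user $|A_i\cap S_a|\le|S_a|\le U/(C+1)$, so $w^{(a)}_{ij}\ge (C+1)/U$ and the column sum is $\ge C+1$. Finally $w_{ij}=1/|A_i|\le 1/|A_i\cap S_a|=w^{(a)}_{ij}$ whenever $j\in A_i\cap S_a$. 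That's the entire argument.

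Your approach instead starts from $w|_{S_a}$ and repairs deficient columns via a transportation feasibility argument, which you then reduce to the budget inequality $|S_a|(C+1)\le U$ using the guaranteed assignment $\pi$. This is sound, but it introduces an auxiliary optimization problem, a case split on $|S_a|=K$, and a somewhat delicate counting of users with $A_i\not\subseteq S_a$---all of which the paper's explicit formula sidesteps. The only mild advantage of your route is that it keeps $w^{(a)}_{ij}=w_{ij}$ on columns that already meet the threshold (closer to ``tight''), whereas the paper's construction may overshoot; but since the lemma only asks for $w\le\sum_a w^{(a)}$, this buys nothing here. The simpler takeaway is: renormalize each user's weight within the block rather than patching the global column deficits.
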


Likewise, we can again apply Caratheodory's theorem to obtain a $C$-anonymous decomposition from Lemma \ref{lem:midc}.

\begin{corollary}\label{cor:anon_alpha_decomp}
If $(C+1) \leq U \leq K(C+1)$, there exists a $C$-anonymous decomposition into at most $\alpha D+NK+1$ assignments, where $\alpha = \lceil K(C+1)/U \rceil$. Moreover, it is possible to find this decomposition efficiently. 
\end{corollary}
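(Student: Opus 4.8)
The plan is to prove Corollary~\ref{cor:anon_alpha_decomp} in exact analogy with Corollary~\ref{cor:anon_decomp}: use Lemma~\ref{lem:midc} to certify that $\tfrac{1}{\alpha}w$ lies in the $C$-anonymity polytope $\cP_C$, and then feed this into Lemma~\ref{lem:cara} (our packaging of Caratheodory's theorem) with $\beta = 1/\alpha$ to extract a $C$-anonymous decomposition of $G$ into at most $\tfrac{1}{\beta}D + NK + 1 = \alpha D + NK + 1$ assignments. The efficiency claim will then follow by unwinding the constructive versions of these two lemmas.

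First I would unpack Lemma~\ref{lem:midc}: it supplies points $w^{(a)} \in \cP_C(S_a)$, $a = 1,\dots,\alpha$, with $w \le \sum_a w^{(a)}$ coordinatewise. Since each $\cP_C(S_a) \subseteq \cP_C$ and $\cP_C$ is convex, the average $\tfrac{1}{\alpha}\sum_a w^{(a)}$ is again a point of $\cP_C$. Because the blocks $S_1,\dots,S_\alpha$ partition $[K]$ and (by the last family of constraints in \eqref{eq:lp} together with nonnegativity, via Lemma~\ref{lem:member_pcs}) each $w^{(a)}$ is supported only on columns in $S_a$, the $w^{(a)}$ have pairwise disjoint column supports. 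Restricting the inequality $w \le \sum_a w^{(a)}$ to the columns of $S_a$ then gives $w|_{S_a} \le w^{(a)}$, and $w = \sum_a w|_{S_a}$, so $\tfrac{1}{\alpha}w \le \tfrac{1}{\alpha}\sum_a w^{(a)} \in \cP_C$.

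The step that needs care — and what I expect to be the main obstacle — is concluding from ``$\tfrac1\alpha w$ lies coordinatewise below a point of $\cP_C$'' that ``$\tfrac1\alpha w \in \cP_C$'', since $\cP_C$ is not downward closed in general (e.g.\ for $N=2$, $K=1$, $C=1$ it is the segment $\{(\lambda,\lambda)\}$, which does not contain $(\tfrac12,0)$). I would close this gap using the extra structure of a $U$-batched graph: either by arranging that the construction inside Lemma~\ref{lem:midc} already returns $w^{(a)}$ that agree with $w|_{S_a}$ on the demanded pairs and only pad each column sum up to $C+1$ using candidate users from $B_j$ (feasible since $|B_j|\ge U$ and $|S_a|\le U/(C+1)$), so that $\tfrac1\alpha\sum_a w^{(a)}$ in fact equals $\tfrac1\alpha w$ on $\operatorname{supp}(w)$ and vanishes off it; or, if Lemma~\ref{lem:midc} is only available with slack, by a short direct argument writing $\tfrac1\alpha w$ as a convex combination of vertices of $\cP_C$ obtained by coordinatewise pruning the vertices that appear in a Caratheodory decomposition of $\tfrac1\alpha\sum_a w^{(a)}$ (pruning a $1$-entry of such a vertex in column $j$ keeps the column sum at $\ge C+1$ provided it was $\ge C+2$, and otherwise that mass is exactly what $w$ demands). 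Note it is important to route the argument through a single membership statement $\tfrac1\alpha w\in\cP_C$ rather than decomposing each $w^{(a)}$ separately, since the latter would inflate the additive term to $\alpha(NK+1)$ instead of the claimed $NK+1$.

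Once $\tfrac1\alpha w \in \cP_C$ is established, Lemma~\ref{lem:cara} with $\beta = 1/\alpha$ yields the stated decomposition into at most $\alpha D + NK + 1$ assignments. For efficiency, each $w^{(a)}$ is found by solving one linear program (membership in $\cP_C(S_a)$ is the LP \eqref{eq:lp} by Lemma~\ref{lem:member_pcs}), the reassembly into a single point of $\cP_C$ is the elementary averaging/pruning above, and the Caratheodory decomposition inside the proof of Lemma~\ref{lem:cara} runs in polynomial time; composing these gives a polynomial-time anonymous decomposition algorithm.
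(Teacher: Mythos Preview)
Your plan is exactly the paper's: obtain $w^{(a)}\in\cP_C(S_a)$ from Lemma~\ref{lem:midc}, average to land in $\cP_C$, then feed $\beta=1/\alpha$ into Lemma~\ref{lem:cara}. You are right to flag the downward-closure step as the delicate one; in fact the paper dispatches it in one line by writing $\tfrac{1}{\alpha}w=\lambda\cdot\bigl(\tfrac{1}{\alpha}\sum_a w^{(a)}\bigr)+(1-\lambda)\cdot 0$ for some scalar $\lambda\in[0,1]$, which---as your $N=2,K=1,C=1$ example already shows---cannot hold as a coordinatewise equality in general (with the explicit $w^{(a)}$ from the proof of Lemma~\ref{lem:midc}, the ratio $w_{ij}/w^{(a)}_{ij}=|A_i\cap S_a|/|A_i|$ varies with $i$ and $a$). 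So your instinct is correct; the paper's displayed identity is not literally valid.

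That said, neither of your proposed repairs is needed, and you do not actually have to establish $\tfrac{1}{\alpha}w\in\cP_C$. The forward direction of Lemma~\ref{lem:cara} only uses the Caratheodory point to \emph{majorize} $\beta w$: a $C$-anonymous decomposition requires ``at least $w_{ij}$ informative assignments,'' not exactly $w_{ij}$. So set $p:=\tfrac{1}{\alpha}\sum_a w^{(a)}\in\cP_C$; since $\operatorname{supp}(p)\subseteq\operatorname{supp}(w)$ (each $w^{(a)}$ is supported on pairs with $j\in A_i$) and $p\ge\tfrac{1}{\alpha}w$, Caratheodory-decomposing $p$ and taking $\lceil \alpha D\lambda_\ell\rceil$ copies of each vertex-assignment already yields a valid decomposition of $G$ with at most $\alpha D+NK+1$ assignments; condition~1 is safe because vertices in the decomposition inherit $p$'s support. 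Finally, your worry about the additive term is unfounded: the paper's efficiency argument decomposes each $w^{(a)}$ separately inside the $N|S_a|$-dimensional polytope $\cP_C(S_a)$, giving at most $N|S_a|$ nonzero vertices per block; since $\sum_a|S_a|=K$ and all zero-vertex contributions collapse into one, the total is still $NK+1$.
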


\subsection{Settings without user-clustering} \label{sec:nou}

The previous algorithms we have presented for anonymous bandits rely heavily on the existence of a user-cluster assumption $U$. In this section we present an algorithm (Algorithm \ref{alg:nou}) for anonymous bandits which works in the absence of any user-cluster assumption as long as $N > C$. This comes at the cost of a slightly higher regret bound which scales as $T^{2/3}$ -- as we shall see in Section \ref{sec:lbs}, this dependence is in some sense necessary.

\begin{algorithm2e}[h]
  \caption{Explore-then-commit algorithm for anonymous bandits without a user-clustering assumption.}\label{alg:nou}
  \textbf{Input:} Anonymity parameter $C$, time-horizon $T$, number of users $N$, and number of arms $K$. \\
  Set $T_{exp} = 10C^{2/3}K^{1/3}T^{2/3}(\log NKT)^{1/3}$.\\
  
  For each user $i \in [N]$ and arm $j \in [K]$ initialize two variables $n_{ij} = 0$ and $\sigma_{ij} = 0$.\\
  \For{$r \gets 1$ \KwTo $\lceil T_{exp}/(2C+2) \rceil$}{
    Divide $[N]$ arbitrarily into $S$ groups $G_{1}, \dots, G_{S}$ of size $C+1$ (adding all remaining users to the last group $G_{S}$). \\
    For each group $G_{s}$, set $j_{s} = (r \bmod K)$. \\
    Run the feedback-eliciting sub-algorithm on the assignment $\pi$ induced by the groups $G_{s}$ (where if $i \in G_{s}$ then $\pi(i) = j_{s}$). \\
    \For{$i \gets 1$ \KwTo $N$}{
        $n_{i,\pi(i)} \gets n_{i,\pi(i)} + 1$ \\
        $\sigma_{i, \pi(i)} \gets \sigma_{i, \pi(i)} + \hat{\mu}_{i, \pi(i)}$. Here $\hat{\mu}_{i, \pi(i)}$ is the unbiased estimate for $\mu_{i, \pi(i)}$ produced by the feedback-eliciting sub-algorithm.
    }
  }
  \For{remaining rounds $t$}{
    Match user $i$ to $\arg\max_{j} \sigma_{ij}/n_{ij}$.
  }
\end{algorithm2e}

Algorithm \ref{alg:nou} follows the standard pattern of Explore-Then-Commit algorithms (see e.g. Chapter 6 of \cite{lattimore2020bandit}). For approximately $O(T^{2/3})$ rounds, we run the feedback-eliciting sub-algorithm on random assignments from users to arms (albeit ones which are chosen to guarantee each arm with any users matched to it has at least $C+1$ users matched to it), getting unbiased estimates of the means $\mu_{i, j}$. For the remaining arms, we match each user to their historically best-performing arm.

\begin{theorem}\label{thm:nou_alg}
Algorithm \ref{alg:nou} incurs an expected regret of at most $\tilde{O}(NC^{2/3}K^{1/3}T^{2/3})$ for the anonymous bandits problem.
\end{theorem}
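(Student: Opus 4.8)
The plan is to carry out the standard Explore-Then-Commit analysis, splitting the regret into the contribution of the exploration phase (the first $\lceil T_{exp}/(2C+2)\rceil$ iterations of the outer loop) and the contribution of the commit phase, and conditioning on a good event on which the empirical means have concentrated.

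\emph{Exploration phase.} Each outer iteration invokes the feedback-eliciting sub-algorithm for exactly $2C+2$ rounds, so the exploration phase occupies at most $T_{exp}+2C+2$ rounds. In each such round every one of the $N$ users earns a reward in $[0,1]$, hence contributes at most $1$ to the regret, so the exploration phase contributes at most $N(T_{exp}+2C+2) = \tilde{O}(NC^{2/3}K^{1/3}T^{2/3})$. The group-formation step is well-defined precisely because $N > C$: every group has size in $[C+1,2C+1]$, which is exactly what the feedback-eliciting sub-algorithm requires, and all users in an iteration play the same arm $r \bmod K$, so the sub-algorithm returns an estimate for every user.

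\emph{Concentration.} Since $j_s = (r \bmod K)$ cycles through all $K$ arms, after $R := \lceil T_{exp}/(2C+2)\rceil$ iterations every pair $(i,j)$ has been sampled $n_{ij} \geq \lfloor R/K\rfloor =: n$ times, with $n = \Theta(T_{exp}/(CK)) = \Theta\big(T^{2/3}(\log NKT)^{1/3}/(C^{1/3}K^{2/3})\big)$. By Lemma \ref{lem:fesa} each of these estimates is independent across rounds (rewards are drawn freshly each round), unbiased for $\mu_{ij}$, and $O(C)$-subgaussian, so $\sigma_{ij}/n_{ij}$ is $O(C/n_{ij})$-subgaussian about $\mu_{ij}$. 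A subgaussian tail bound together with a union bound over the $NK$ pairs shows that, with probability at least $1-(NKT)^{-2}$, $|\sigma_{ij}/n_{ij}-\mu_{ij}| \leq \eps$ for all $i,j$, where $\eps = O\big(\sqrt{C\log(NKT)/n}\big) = O\big(C^{2/3}K^{1/3}(\log NKT)^{1/3}/T^{1/3}\big)$; the constant $10$ and the precise form of $T_{exp}$ in the algorithm are tuned so that exactly this holds.

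\emph{Commit phase, and where the work is.} On this good event, writing $j_i^* = \arg\max_j \mu_{ij}$ and letting $\hat{j}_i = \arg\max_j \sigma_{ij}/n_{ij}$ be the committed arm, we get $\mu_{i,\hat{j}_i} \geq \sigma_{i\hat{j}_i}/n_{i\hat{j}_i} - \eps \geq \sigma_{i j_i^*}/n_{i j_i^*} - \eps \geq \mu_{i,j_i^*} - 2\eps$, so every user loses at most $2\eps$ per round in the commit phase, for a total of at most $2NT\eps = \tilde{O}(NC^{2/3}K^{1/3}T^{2/3})$; on the complementary event (probability $\leq (NKT)^{-2}$) the regret is at most the trivial $NT$, which contributes $o(1)$. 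Summing the two phases gives the claimed bound. This is essentially a textbook ETC argument; the only non-automatic ingredients are (i) carrying the $O(C)$-subgaussian scale of the leave-one-out estimator, which is precisely Lemma \ref{lem:fesa}, and (ii) checking that the stated $T_{exp}$ balances the exploration cost $NT_{exp}$ against the commit cost $2NT\eps$ with $\eps = \Theta(C\sqrt{K\log(NKT)/T_{exp}})$ — these are of the same order exactly when $T_{exp} = \Theta(C^{2/3}K^{1/3}T^{2/3}(\log NKT)^{1/3})$. Degenerate regimes — $T_{exp} \geq T$, or $T$ small enough (roughly $T \lesssim C^2 K$) that $NT$ already meets the target — are absorbed into the trivial $NT$ bound.
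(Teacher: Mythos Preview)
Your proposal is correct and follows essentially the same approach as the paper's own proof: bound the exploration regret by $NT_{exp}$, use Lemma~\ref{lem:fesa} to get $O(C)$-subgaussian estimates and hence $O(C/n_{ij})$-subgaussian empirical means, apply a Hoeffding-plus-union-bound to get all estimates within $\eps = \tilde{O}(C^{2/3}K^{1/3}T^{-1/3})$ with high probability, and bound the commit regret by $2NT\eps$. The only cosmetic differences are that you spell out the balancing of $T_{exp}$ and discuss degenerate regimes a bit more explicitly than the paper does.
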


\subsection{Lower bounds}\label{sec:lbs}

We finally turn our attention to lower bounds. In all of our lower bounds, we exhibit a family of hard distributions over anonymous bandits problem instances, where any algorithm facing a problem instance randomly sampled from this distribution incurs at least the regret lower bound in question.

We begin by showing an $\Omega(N\sqrt{CKT})$ lower bound that holds even in the presence of a user-cluster assumption (and in fact, even when $U \geq K(C+1)$). Since Algorithm \ref{alg:main} incurs at most $\tilde{O}(NC\sqrt{KT})$ regret for $U \geq K(C+1)$, this shows the regret bound of our algorithm is tight in this regime up to a factor of $\sqrt{C}$ (and additional polylogarithmic factors).

\begin{theorem}\label{thm:main_lb}
Every learning algorithm for the anonymous bandits problem must incur expected regret at least $\Omega(N\sqrt{CKT})$, even when restricted to instances satisfying $U \geq K(C+1)$.
\end{theorem}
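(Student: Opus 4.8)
The plan is to construct a distribution over hard instances and reduce the anonymous bandit problem on these instances to $N/(C+1)$ essentially independent copies of a standard stochastic bandit problem, each faced by a ``block'' of $C+1$ users, and then invoke the classical $\Omega(\sqrt{KT})$ per-block lower bound with an extra factor of $C+1$ coming from the fact that each block contains $C+1$ users whose rewards all contribute to the regret.

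\textbf{Construction.} Partition the $N$ users into $m = \lfloor N/(C+1) \rfloor$ blocks $V_1, \dots, V_m$ of $C+1$ users each (discard any leftover users, or pad a block; this only costs constant factors). Within each block $V_\ell$, all $C+1$ users share the \emph{same} reward distribution: for a uniformly random ``good arm'' $a_\ell \in [K]$ chosen independently per block, set $\mu_{i,j} = 1/2 + \eps \cdot \1[j = a_\ell]$ for every $i \in V_\ell$, where $\eps = c\sqrt{K/T}$ for a small constant $c$. Since every block has $C+1$ identical users and there are $K$ arms, every arm $j$ is the optimal arm for all users in the $\approx m/K$ blocks that drew $a_\ell = j$; as long as $m \geq K(C+1)$, i.e. $N \gtrsim K(C+1)^2$, we can in fact force every arm to be optimal for at least $K(C+1)$ users, so the instance satisfies the $U \geq K(C+1)$ user-cluster assumption as claimed (if $N$ is smaller, choose the $a_\ell$ not uniformly but so that each arm is hit exactly $\lceil U/(C+1)\rceil$ times; the averaging argument below is unaffected).

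\textbf{Reduction and regret accounting.} The key observation is that within a block, the users are information-theoretically indistinguishable: any $C$-anonymous feedback the algorithm can solicit that involves users of block $V_\ell$ is a sum of i.i.d.\ draws from the per-arm distributions of $V_\ell$ (plus contributions from other blocks, which are independent and can be simulated), so the algorithm's total information about $a_\ell$ is no more than what a single standard bandit learner obtains from the samples it draws on block $V_\ell$'s arms. More precisely, for each block $\ell$ let $N_\ell(j)$ be the (random) total number of rounds in which users of $V_\ell$ are assigned to arm $j$; note that in each round, each user of $V_\ell$ is assigned to exactly one arm, so $\sum_j N_\ell(j) \le T$ per user — actually I will track $N_\ell(j)$ as the number of (round, user) pairs, giving $\sum_j N_\ell(j) = (C+1)T$. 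The per-round, per-user regret contributed by a user in $V_\ell$ assigned to arm $j \neq a_\ell$ is $\eps$, so the total regret is $\eps \sum_\ell \E\big[\sum_{j \neq a_\ell} N_\ell(j)\big] = \eps\big[(C+1)mT - \sum_\ell \E[N_\ell(a_\ell)]\big]$. It therefore suffices to upper bound $\E[N_\ell(a_\ell)]$, and for this we use the standard information-theoretic argument (as in Lattimore--Szepesvári Ch.~15): by Pinsker / the divergence decomposition, conditioned on the algorithm's behavior, $\E[N_\ell(a_\ell)]$ differs from $\E_{a'}[N_\ell(a')]= (C+1)T/K$ (the value under a uniform-over-arms null where no arm is special) by at most a term controlled by $\sqrt{\E[N_\ell(\cdot)] \cdot \eps^2}$ summed appropriately; averaging over the random choice of $a_\ell$ and choosing $\eps = c\sqrt{K/T}$ makes $\E[N_\ell(a_\ell)] \le (C+1)T/K + O((C+1)T/K)\cdot(\text{const} < 1)$, so $(C+1)mT - \sum_\ell \E[N_\ell(a_\ell)] = \Omega((C+1)mT)$. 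Plugging back, the regret is $\Omega(\eps (C+1) m T) = \Omega(\sqrt{K/T}\cdot (C+1) m T) = \Omega((C+1) m \sqrt{KT}) = \Omega(N\sqrt{KT})$.

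\textbf{Recovering the $\sqrt{C}$.} The bound above only gives $\Omega(N\sqrt{KT})$; the extra $\sqrt{C}$ must come from the anonymity constraint itself — the fact that feedback on a block is \emph{aggregated} over $\geq C+1$ users means each ``query'' to block $V_\ell$ on arm $j$, while it advances $N_\ell(j)$ by $C+1$, yields a sum of $C+1$ i.i.d.\ samples, i.e.\ only \emph{one} effective observation with variance $\Theta(C)$ (the mean is $(C+1)(1/2 + \eps\1[j=a_\ell])$, standard deviation $\Theta(\sqrt{C})$, so the per-observation signal-to-noise is $\eps\sqrt{C+1}/\sqrt{C+1} = \eps$ but you get $1/(C+1)$ as many observations per unit of regret). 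Concretely, I would re-run the divergence-decomposition bound counting \emph{rounds} rather than (round,user) pairs: the number of informative rounds devoted to block $V_\ell$ is at most $T$, each contributing KL at most $O(\eps^2)$ (one aggregated Gaussian-ish observation of effective variance $\Theta(C)$ per round gives KL $\approx (C+1)^2\eps^2/(C+1) = (C+1)\eps^2$ — wait, this actually \emph{helps} the learner; the correct bookkeeping is that regret per informative round on a bad arm is $(C+1)\eps$ while KL per round is $O((C+1)\eps^2)$, so the ratio (regret)$^2$/KL is $(C+1)^2\eps^2 / ((C+1)\eps^2) = (C+1)$, versus $K$ arms and $T$ rounds), and balancing $\eps$ across the $m$ blocks and $K$ arms with the constraint that total rounds is $T$ yields the extra factor. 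I expect \textbf{this last step — pinning down exactly where the $\sqrt{C}$ enters and setting up the divergence decomposition so that aggregation genuinely costs the learner} — to be the main obstacle, since naively aggregation seems only to add variance without reducing the number of rounds, and one must carefully use that each round only produces \emph{one} aggregate observation per group while the regret incurred that round scales with the group size $C+1$. The cleanest route is probably to choose $\eps \asymp \sqrt{CK/T}$ so that each arm needs $\Theta(T/(CK))$ informative rounds to be distinguished, there is only budget for $\Theta(T/K)$ informative rounds per block total, hence $\Omega(1)$ fraction of blocks never identify $a_\ell$, each such block pays $\Omega((C+1)\eps T) = \Omega(\sqrt{C^3 K/T}\cdot T)$... and then summing over $m = \Theta(N/C)$ bad blocks gives $\Omega((N/C)\cdot\sqrt{C^3KT}) = \Omega(N\sqrt{CKT})$, matching the claim; I would fill in the constants and the Pinsker step in the appendix.
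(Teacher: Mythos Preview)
Your construction has a genuine gap that your own ``wait'' moment flags but does not resolve. When all $C+1$ users in a block share the same reward distribution, grouping them together and observing the aggregate sum is \emph{not} a handicap for the learner: it is equivalent to receiving $C+1$ independent samples from that arm's distribution in a single round. So per block the learner effectively faces a standard $K$-armed bandit with $(C+1)T$ total pulls, whose minimax regret is $\Theta(\sqrt{(C+1)KT})$. Summing over $m = N/(C+1)$ blocks yields only $\Omega\bigl(N\sqrt{KT/(C+1)}\bigr)$, which is strictly \emph{weaker} than even $\Omega(N\sqrt{KT})$, let alone $\Omega(N\sqrt{CKT})$. Your final arithmetic silently assumes each aggregated observation has variance $\Theta(C)$ but signal only $\Theta(\eps)$ rather than $(C+1)\eps$; that would be true if the $C$ extra users in a group contributed pure noise, but in your construction they are copies of the target user and carry the same signal, so the signal-to-noise ratio actually \emph{improves} by $\sqrt{C+1}$ under aggregation. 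There is also no ``budget of $T/K$ informative rounds per block'' constraint anywhere in the model: the learner can devote all $T$ rounds to every block simultaneously, since blocks are disjoint.

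The paper's argument avoids this by making every user's optimal arm an \emph{independent} uniform draw from $[K]$, so that no two users share a distribution. It then passes to an easier auxiliary problem in which each user $i$ is told the distributions of all other users, acts on its own, and each round observes its own reward plus the rewards of at least $C-1$ other users on the same arm. Because user $i$ knows the other users' means, it can subtract them off, leaving an unbiased estimate of $\mu_{i,j}$ corrupted by the sum of at least $C$ independent unit-variance noise terms, i.e.\ variance-$\Theta(C)$ Gaussian noise. That is precisely the classical hard instance for $K$-armed bandits with $\sigma^2 = C$, giving $\Omega(\sqrt{CKT})$ regret per user and hence $\Omega(N\sqrt{CKT})$ overall. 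The essential difference from your attempt: in the paper's instance the other users in a group contribute \emph{noise but no signal} about user $i$'s unknown, whereas in your block construction they contribute signal and noise in equal proportion, which helps rather than hurts the learner.
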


We now shift our attention to settings where there is no guaranteed user-cluster assumption. We first show that the $O(T^{2/3})$ dependency of the algorithm in Section \ref{sec:nou} is necessary.

\begin{theorem}\label{thm:lb_nou}
There exists a family of problem instances satisfying $N=C+1$ where any learning algorithm must incur regret at least $\Omega(T^{2/3})$.
\end{theorem}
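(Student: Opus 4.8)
\textbf{Proof proposal for Theorem \ref{thm:lb_nou}.}
The plan is a two‑point (``explore‑then‑commit lower bound'') argument in which each informative sample carries an $\Omega(1)$ \emph{fixed} regret cost; this fixed cost is precisely what upgrades the usual $\sqrt{T}$ bound to $T^{2/3}$. Fix any $C \ge 2$ (so $N = C+1$), take $K=2$, and let all rewards be unit‑variance Gaussians (which are $1$‑subgaussian). Users $1,\dots,C$ are ``anchors'': in every instance they have mean $\tfrac12$ on arm $1$ and mean $\tfrac14$ on arm $2$, so arm $1$ is their clear favorite. We consider two instances $\mathcal{I}_+,\mathcal{I}_-$ that differ only in $\Dc_{N,2}$: user $N$ always has mean $\tfrac12$ on arm $1$, while on arm $2$ its mean is $\tfrac12+\eps$ in $\mathcal{I}_+$ and $\tfrac12-\eps$ in $\mathcal{I}_-$, for a parameter $\eps = \Theta\big((C^2/T)^{1/3}\big)$ fixed at the end. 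Thus user $N$'s optimal arm is arm $2$ in $\mathcal{I}_+$ and arm $1$ in $\mathcal{I}_-$, and distinguishing the two instances is the only obstacle to zero regret on user $N$.

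The key structural point is that, since every group in a round has size $\ge C$ and all its members play one common arm, the only feedback whose law differs between $\mathcal{I}_+$ and $\mathcal{I}_-$ is the aggregate reward of a group that is assigned to arm $2$ \emph{and} contains user $N$; call such a round a \emph{test round}. Because $N = C+1$, such a group must contain at least $C-1 \ge 1$ of the anchor users, each of whom then suffers regret $\tfrac14$ that round, so \emph{every test round incurs at least $\tfrac14(C-1) = \Omega(C)$ regret in either instance}, regardless of what user $N$ pays. Moreover the feedback in a test round is a sum of between $C$ and $C+1$ unit‑variance Gaussians whose mean differs by $2\eps$ across the instances, so its KL divergence between $\mathcal{I}_+$ and $\mathcal{I}_-$ is $O(\eps^2/C)$; consequently, by the chain rule for KL (the per‑round KL vanishes on non‑test rounds), $\mathrm{KL}(\Pr_{\mathcal{I}_-}\,\|\,\Pr_{\mathcal{I}_+}) \le m_-\cdot O(\eps^2/C)$, where $m_- = \E_{\mathcal{I}_-}[\#\text{test rounds}]$.

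Now split into two cases. If $m_- \ge c\,C/\eps^2$ for a small absolute constant $c$, the test‑round cost alone gives regret $\Omega(C)\cdot m_- = \Omega(C^2/\eps^2)$ on $\mathcal{I}_-$. Otherwise $m_- < c\,C/\eps^2$, so $\mathrm{KL}(\Pr_{\mathcal{I}_-}\,\|\,\Pr_{\mathcal{I}_+}) \le \tfrac1{16}$ and, by Pinsker, the transcript laws are within $\tfrac14$ in total variation; applying this to the event ``user $N$ plays arm $2$ in fewer than $T/2$ rounds'' (which forces $\Omega(T\eps)$ regret on $\mathcal{I}_+$, while its complement forces $\Omega(T\eps)$ regret on $\mathcal{I}_-$) yields $\max(\mathrm{Reg}(\mathcal{I}_+),\mathrm{Reg}(\mathcal{I}_-)) = \Omega(T\eps)$. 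In all cases the worst‑case regret over $\{\mathcal{I}_+,\mathcal{I}_-\}$ is $\Omega\big(\min(C^2/\eps^2,\, T\eps)\big)$; choosing $\eps \asymp (C^2/T)^{1/3}$ balances the terms and gives $\Omega(C^{2/3}T^{2/3})$, in particular $\Omega(T^{2/3})$ for constant $C$ (e.g.\ $C=2$, $N=3$). I expect the main obstacle to be the careful treatment of adaptivity: one must argue rigorously that information about $\Dc_{N,2}$ flows only through test rounds — this is exactly where the anonymity constraint does its work — and then push the per‑test‑round KL bound through the chain rule to control the entire transcript, without assuming a clean explore‑then‑commit structure. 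A secondary point is confirming that the $\Omega(C)$ cost of a single test round is robust (it follows purely from the size‑$\ge C$ group constraint forcing an anchor onto arm $2$) and, crucially, does not shrink with $\eps$ — which is precisely why the exponent is $2/3$ rather than $1/2$.
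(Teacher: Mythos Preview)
Your proposal is correct and rests on the same core mechanism as the paper's proof: the anonymity constraint forces any ``informative'' observation about the unknown user's preference to drag along at least $C-1$ other users onto a suboptimal arm, so each such observation carries a fixed $\Omega(1)$ regret cost, which is exactly what converts the standard $\sqrt{T}$ lower bound into $T^{2/3}$.

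The constructions differ, however. The paper works only with $C=2$, taking $N=K=3$, Bernoulli rewards, and a symmetric pair of users $2,3$ with opposite preferences on arms $2,3$; user $1$ (who deterministically prefers arm $1$) is the anchor who must be sacrificed to gain information. The indistinguishability step is argued somewhat informally via ``number of samples needed to tell $\mathrm{Bern}(1/2\pm\eps)$ apart.'' Your construction instead uses $K=2$, Gaussian rewards, $C$ identical anchors and one target user, and carries out the information bound cleanly via the KL chain rule and Pinsker. This buys you two things: the argument works uniformly for every $C\ge 2$, and you actually obtain the sharper $\Omega(C^{2/3}T^{2/3})$ bound, matching the $C$-dependence in the upper bound of Theorem~\ref{thm:nou_alg}. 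The paper's construction is slightly more self-contained (no KL machinery), but yours is both more general and tighter.
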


Finally, we show the assumption in Section \ref{sec:nou} that $N > C$ is in fact necessary; if $N=C$ then it is possible that no algorithm obtains sublinear regret.

\begin{theorem}\label{thm:lb_lin}
There exists a family of problem instances satisfying $N=C$ where any learning algorithm must incur regret at least $\Omega(T)$.
\end{theorem}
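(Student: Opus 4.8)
The plan is to exploit the fact that when $N = C$ the anonymity constraint is maximally restrictive: the only subset of users of size at least $C$ is the set of \emph{all} $N$ users, so the only way to elicit any feedback in a round is to place all $N$ users into a single group, which forces all of them onto a common arm, and observe the lone aggregate $\sum_{i=1}^{N} r_{i,t}$. Hence, no matter how the algorithm behaves, the only information it can extract from an instance is, for each arm $j$ it chooses to query (by assigning everyone to $j$), the value $\sum_{i} r_{i,j}$; the assignment of users to arms is otherwise unconstrained but produces no feedback. I will build two instances that produce identical feedback transcripts yet have reflected optima, so that any algorithm pays linear regret on at least one of them.

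Concretely, take $K = 2$ and deterministic rewards (point masses, which are $0$-subgaussian and a fortiori $1$-subgaussian); for concreteness one may even take $N = C = 2$, though the argument works for any even $N = C$, and the odd case follows by appending a single ``dummy'' user with means $\mu_{i,1} = \mu_{i,2} = 1/2$ in both instances. In instance $\mathcal{D}$, let users $i \le N/2$ have $(\mu_{i,1}, \mu_{i,2}) = (1, 0)$ and users $i > N/2$ have $(\mu_{i,1}, \mu_{i,2}) = (0, 1)$; instance $\mathcal{D}'$ is obtained by swapping the roles of the two halves. In \emph{both} instances the aggregate reward of arm $1$ equals $N/2$ and that of arm $2$ equals $N/2$, and since rewards are deterministic the feedback returned in any round is a fixed value identical under $\mathcal{D}$ and $\mathcal{D}'$. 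Therefore the whole sequence of observations is the same in the two instances, and so the joint law of the sequence of user-to-arm assignments $X_1, \dots, X_T$ produced by any (adaptive, randomized) algorithm is identical under $\mathcal{D}$ and under $\mathcal{D}'$.

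The finish is a reflection identity. For every user $i$ and every arm $a$ one checks directly that $\mu_{i,a} + \mu'_{i,a} = 1$, where $\mu$ and $\mu'$ are the mean tables of $\mathcal{D}$ and $\mathcal{D}'$. Summing over users, for \emph{any} assignment $x : [N] \to [K]$ the per-round reward under $\mathcal{D}$ plus the per-round reward under $\mathcal{D}'$ equals $N$. Since in each instance the optimal per-round reward is $N$ (every user has $\max_j \mu_{i,j} = 1$), the sum of the two per-round expected regrets is exactly $N$; using that $X_t$ has the same law under both instances and summing over the $T$ rounds gives $\E_{\mathcal{D}}[\mathbf{Reg}] + \E_{\mathcal{D}'}[\mathbf{Reg}] = NT$. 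Hence $\max\{\E_{\mathcal{D}}[\mathbf{Reg}], \E_{\mathcal{D}'}[\mathbf{Reg}]\} \ge NT/2 = \Omega(T)$, which proves the theorem. (In the odd-$N$ case the dummy user also satisfies $\mu_{i,a} + \mu'_{i,a} = 1$, so the identity and the conclusion are unchanged up to the obvious constants.)

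I expect the only real work --- and it is light --- to be the careful justification that, when $N = C$, the feedback available each round is exactly ``pick one arm and learn its aggregate reward (or learn nothing)'': in particular that no finer ``partial-group'' observation is possible because every admissible group has size exactly $N$, and that this holds even for adaptive algorithms that decide which arm to query based on past observations. Once that is pinned down, the indistinguishability of $\mathcal{D}$ and $\mathcal{D}'$ and the reflection identity are immediate, and the subgaussianity and parity bookkeeping are routine.
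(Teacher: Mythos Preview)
Your proposal is correct and follows essentially the same approach as the paper: construct two deterministic instances with $K=2$ in which the only admissible feedback (the all-user aggregate on a single arm) is identical, and exploit the reflection $\mu_{i,a}+\mu'_{i,a}=1$ to force the sum of the two regrets to be linear in $T$. The paper simply fixes $N=C=K=2$ with mean tables $(1,0,0,1)$ and $(0,1,1,0)$, which is your construction specialized to $N=2$; your extension to general $N=C$ (including the odd-$N$ dummy-user patch) is a mild and correct generalization of the same idea.
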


\section{Simulations}

Finally, we perform simulations of our anonymous bandits algorithms -- the explore-then-commit algorithm (Algorithm \ref{alg:nou}) and several variants of Algorithm \ref{alg:alg1} with different decomposition algorithms -- on synthetic data. We observe that both the randomized decomposition and LP decomposition based variants of Algorithm \ref{alg:alg1} significantly outperform the explore-then-commit algorithm and the greedy decomposition variant, as predicted by our theoretical bounds. We discuss these in more detail in Section \ref{app:experiments} of the Supplemental Material. 

\newpage
\bibliography{ref}
\bibliographystyle{abbrvnat}

\newpage
\appendix

\section{Batched Successive-Elimination Policy}\label{app:base}

For completeness, we include a description of the $\BaSE$ algorithm from \cite{gao2019batched} below. The algorithm itself is quite simple: it just eliminates all arms that are suboptimal with high probability after each batch.

\begin{algorithm2e}[h]
  \caption{Batched Successive Elimination ($\BaSE$) from \cite{gao2019batched}}\label{alg:base}
  \textbf{Input:} Number $K$ of arms, time horizon $T$, number of batches $B$, grid $t_{0} = 0 < t_1, \dots, t_{B} = T$, parameter $\gamma$. \\
  Set $A = [K]$ (this will be the set of alive arms). \\
  For each $j \in [K]$, initialize $n_{j} = 0$ and $\sigma_{j} = 0$.\\
  \For{$b \gets 1$ \KwTo $B-1$}{
    \For{$t \gets t_{b-1}+1$ \KwTo $t_{b}$} {
        Let $j$ be the $(t \bmod |A|)$th element of $A$.\\
        Play arm $j$ and receive reward $x_t$.\\
        $n_{j} \gets n_{j} + 1$. $\sigma_{j} \gets \sigma_{j} + x_t$.
    }
    Let $j^* = \arg\max_{j \in A} \sigma_{j}/n_j$. \\
    Let $\tau = \max_{j \in A} n_j$ (by construction, all $n_{j}$ for $j \in A$ should be approximately equal). \\
    \For{$j \in A$}{
        \If{$\sigma_{j^*}/n_{j^*} - \sigma_{j}/n_{j} \geq \sqrt{\gamma/\tau}$}{
            Remove $j$ from $A$.
        }
    }
  }
  \For{$t \gets t_{b-1}+1$ \KwTo $t_{b}$} {
    Pull arm $j^* = \arg\max_{j \in A} \sigma_{j}/n_{j}$.
  }
\end{algorithm2e}

The authors show that when $B = \log\log T$, $\gamma = O(\log(KT))$, and $t_{b} = O(T^{1 - 1/2^{b}})$ the above algorithm incurs at most $\tilde{O}(\sqrt{KT})$ regret. In our applications, we will set $\gamma = O(\log(NKT))$; this guarantees that the probability the best arm is ever eliminated from $A$ is at most $1/\mathrm{poly}(N, K, T)$ (see Lemma 1 of \cite{gao2019batched}), and therefore with high probability Algorithm \ref{alg:main} will never have to abort.

\section{Simulations}\label{app:experiments}

\begin{figure}[h]
\includegraphics[width=13cm]{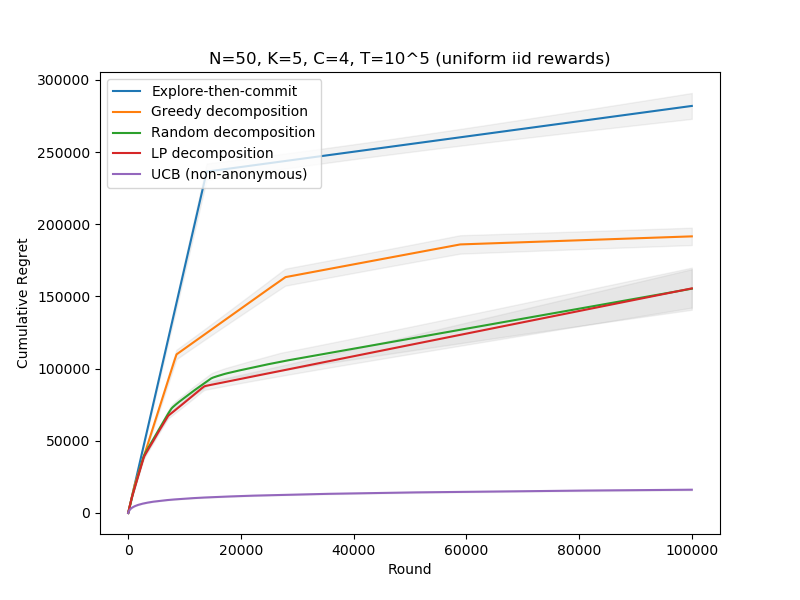}
\caption{Cumulative regrets over time of five different learning algorithms on instances with uniform iid rewards (i.e., each user/arm distribution is a Bernoulli distribution with parameter independently drawn from $U([0, 1])$). Grey regions represent 95\% confidence intervals.}
\label{fig:uniform}
\end{figure}

\begin{figure}[h]
\includegraphics[width=13cm]{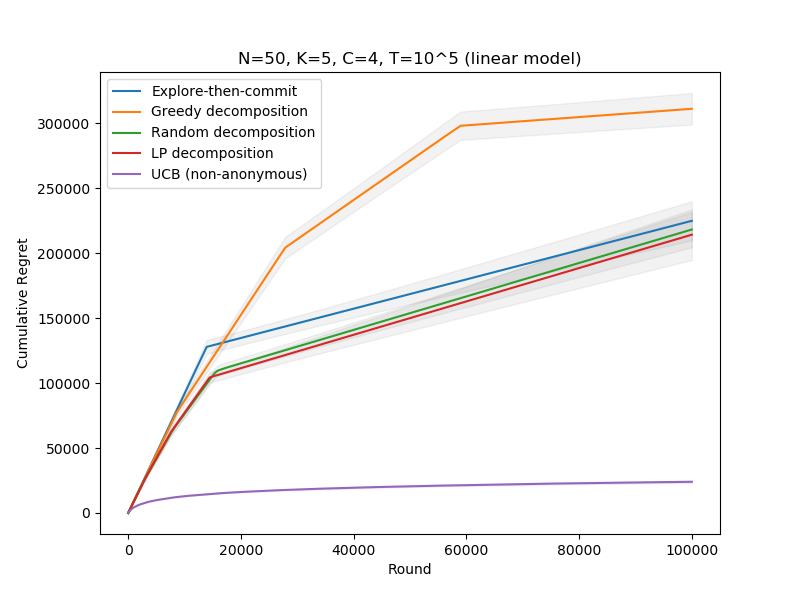}
\caption{Same as Figure \ref{fig:uniform}, but where reward distributions arise from a linear model instead of being generated uniformly (see text).}
\label{fig:linear}
\end{figure}

In this appendix we empirically evaluate the learning algorithms we introduce on this paper on several synthetic problem instances. We implement the following algorithms:

\begin{enumerate}
    \item The Explore-Then-Commit algorithm (Algorithm \ref{alg:nou}) of Section \ref{sec:nou}.
    \item Algorithm \ref{alg:alg1} with \textit{greedy decomposition}, as defined in Section \ref{sec:greedy_decomp}. Recall that this decomposition algorithm simply assigns each of the users to the same arm, cycling through the arms until all the demand is met.
    \item Algorithm \ref{alg:alg1} with \textit{random decomposition}, as defined in the beginning of Section \ref{sec:lp_decomp}. This decomposition algorithm assigns each user to a random one of their active arms, and generates such assignments until all demand is met.
    \item Algorithm \ref{alg:alg1} with \textit{LP-based decomposition}, as defined in Corollary \ref{cor:anon_decomp} of Section \ref{sec:lp_decomp}. This decomposition algorithm writes the demand vector as a convex combination of the vertices of the polytope $\cP_{C}([K])$ (each of which corresponds to a specific assignment).
    \item A non-anonymous UCB algorithm, where each user independently runs UCB over the $K$ arms.
\end{enumerate}

We evaluate these algorithms on two classes of instances, both with $N=50$ users, $K = 5$ arms, anonymity parameter $C = 4$, and $T = 10^5$ rounds. In the first class of instances (Figure \ref{fig:uniform}), the rewards for user $i$ and arm $j$ are drawn from a Bernoulli $\mu_{ij}$ distribution, where $\mu_{ij}$ is sampled from the uniform distribution $U([0, 1])$ independently from all other means. The second class of instances (Figure \ref{fig:linear}) is generated by the following linear model: each user $i$ is assigned a random unit norm 10-dimensional vector $v_i$ and each action $j$ is assigned a random unit norm 10-dimensional vector $w_j$. The rewards for user $i$ and arm $j$ are drawn from a Bernoulli $\mu_{ij}$ distribution, where $\mu_{ij} = 0.5(\theta_{ij} + 1)$, and where $\theta_{ij}$ is the cosine similarity between $v_i$ and $w_j$. 

With these parameters, both classes of instances almost always satisfy the user-cluster assumption for $U \geq C+1$, so the preconditions to apply Algorithm \ref{alg:alg1} (and the various decomposition algorithms) almost always hold. Nonetheless, we implement Algorithm \ref{alg:alg1} semi-robustly: e.g., instead of aborting when we don't see a $U$-batched graph, we continue assigning users to random active arms. Similarly, we implement all algorithms so that they are agnostic to $U$ (essentially, we set $U = C+1$ in Algorithm \ref{alg:alg1} and compute an appropriate lower-bound on the approximation factor $\alpha$). We optimize various hyperparameters of our algorithm (e.g. the learning rate of $\BaSE$) by validation on an independent set of learning instances. See the attached code for additional details. 

Figure \ref{fig:uniform} shows the average cumulative rewards (and confidence intervals) of twenty independent runs of these algorithms on the class of instances with uniformly generated rewards. Unsurprisingly, the one algorithm which violates anonymity (parallel UCB) does much better than all the $C$-anonymous algorithms, obtaining a total regret about 10 times smaller than the next better (this is consistent with our regret bounds, which indicate that at best our anonymous algorithms incur at least $\Omega(C)$ more regret than non-anonymous algorithms). The ordering of the various anonymous algorithms is also as expected. Explore-then-commit (which works without any guarantee on $U$ and only uses one ``batch'') performs significantly worse than the variations of Algorithm \ref{alg:alg1} based off batched bandit algorithms. Within the variations of Algorithm \ref{alg:alg1}, the naive greedy decomposition performs worst. The variations with random decomposition and LP decomposition perform similarly; this is not too surprising given that the LP decomposition is in some sense a derandomization of the random decomposition. 

Figure \ref{fig:linear} shows the average cumulative rewards (and confidence intervals) of twenty independent runs of these algorithms on instances generated by the linear model defined above. In most aspects, it is very similar to Figure \ref{fig:uniform}; one major qualitative difference, however, is that greedy decomposition appears to perform much worse on this class of instances (whereas greedy decomposition outperformed explore-then-commit in Figure \ref{fig:uniform}, it does markedly worse than explore-then-commit here). 

\section{Omitted proofs}\label{app:proofs}

\subsection{Proof of Lemma~\ref{lem:base}}

\begin{proof}[Proof of Lemma~\ref{lem:base}]
First note that with a simple Hoeffding bound for an $i$ and a $b$ we have 
\begin{align*}
&Pr\Big(|\bar{Y}^i(t_b)-\mu_i|\leq \frac 1 2 \sqrt{\frac{\gamma \log (TK)}{\tau_b}}\Big) \\&< 2\exp\Big(-2\tau_b \frac 1 4 \frac{ \gamma \log (TK)}{\tau_b}\Big) \\&= 2 \exp\Big(-0.5 \gamma \log(TK) \Big)\\&\leq \frac {1}{(TK)^2}. \hspace{1cm}\text{for large enough constant $\gamma$ and $K\geq 2$}
\end{align*}
By union bound this holds for all $i$ and $b$ with probability at least $1-\frac 1 {TK}$. In the rest, w.l.g. we assume this holds, since $D_{b} \cdot \E\left[ \max_{j \in A_{b}} \Delta_{j} \right] \times \frac 1 {TK} \leq \frac {D_b}{TK}\leq 1$. This means that for any $j$ that survives round $b-1$, (i.e. $\forall j \in A_b$) we have \begin{align*}
    \Delta_j = \mu^*-\mu_j&\leq 2 \sqrt{\frac{\gamma \log (TK)}{\tau_{b-1}}}\\
    &\leq 2 \sqrt{\frac{\gamma \log (TK)}{\frac{u_{b-1}}{K}}}\\
    &=2 \sqrt{\frac{\gamma K \log (TK)}{a^{2-2^{2-b}}}}
\end{align*}
Therefore we have
\begin{align*}
    D_{b} \cdot \E\left[ \max_{j \in A_{b}} \Delta_{j} \right]  &\leq D_b \times 2 \sqrt{\frac{\gamma K \log (TK)}{a^{2-2^{2-b}}}}\\ &\leq u_b\times 2 \sqrt{\frac{\gamma K \log (TK)}{a^{2-2^{2-b}}}}\\
    &=a^{2-2^{1-b}}\times 2 \sqrt{\frac{\gamma K \log (TK)}{a^{2-2^{2-b}}}}\\
    &=\frac{a^{2-2^{1-b}}}{a^{1-2^{1-b}}}\times 2 \sqrt{\gamma K \log (TK)}\\
    &=2a\sqrt{\gamma K \log (TK)}\\
    &=\Theta\Big(T^{\frac 1 {2-2^{1-B}}}\sqrt{\gamma K \log (TK)}\Big)\\
    &=\Theta\big(\sqrt{\gamma TK \log (TK)}\big).
\end{align*}

\end{proof}

\subsection{Proof of Lemma~\ref{lem:fesa}}

\begin{proof}[Proof of Lemma~\ref{lem:fesa}]
Assume (as above) that user $i$ is the $k$th user in $G_{s}$. Let $G'_{s} = G_{s, k} = G_{s} \setminus \{i\}$. For each user $i' \in G_{s}$, let $X_{i'}$ denote the r.v. representing the reward user $i'$ contributed in round $0$ of the above procedure, and for each user $i' \in G'_{s}$, let $Y_{i'}$ denote the r.v. representing the reward user $j$ contributed in round $k$ of the above procedure. Note that (by the definition of our setting), the value $X_{i'}$ and $Y_{i'}$ are independent r.v.s with $\E[X_{i'}] = \E[Y_{i'}] = \mu_{i', \pi(i')}$. In addition, $|X_{i'}|, |Y_{i'}| \in [0, 1]$, so $\Var(X_{i'}), \Var(Y_{i'}) \leq \frac{1}{4}$.

Then, since 

$$\hat{\mu}_{i, \pi(i)} = r_{s, 0} - r_{s, k} = X_{i} + \sum_{i' \in G'} (X_{i'} - Y_{i'}),$$

\noindent
it follows that

$$\E[\hat{\mu}_{i, \pi(i)}] = \E[X_{i}] + \sum_{i' \in G'_s} (\E[X_{i'}] - \E[Y_{i'}]) = \E[X_{i}] = \mu_{i, \pi(i)}.$$

Furthermore, note that each of the r.v.s $X_{i'}$ and $Y_{i'}$ are $1$-subgaussian. Since $\hat{\mu}_{i, \pi(i)}$ is the sum of at most $4C+1$ independent $1$-subgaussian variables, $\hat{\mu}_{i, \pi(i)}$ itself is $(4C+1)$-sub-Gaussian (and hence $O(C)$-subgaussian). 

\end{proof}

\subsection{Proof of Theorem~\ref{thm:main_alg}}

\begin{proof}[Proof of Theorem~\ref{thm:main_alg}]
Fix a user $i$. We will show that the expected regret incurred by user $i$ is at most $\tilde{O}(C\sqrt{\alpha KT})$, thus implying the theorem statement. As mentioned in Appendix \ref{app:base}, the guarantees of $\BaSE$ imply that with high probability this algorithm will never abort, so from now on we will condition on the algorithm not aborting.

Consider the expected regret incurred by user $i$ during batch $b$ of Algorithm \ref{alg:main}. Since $i$ is only ever assigned to arms in $A_{i, b}$ during this batch, and since user $i$ gets matched a total of $\alpha(2C+2)D_{b}$ times during this batch, this expected regret is at most 

$$\alpha(2C+2)D_{b} \cdot \E\left[\max_{j \in A_{i, b}} \Delta_{i, j}\right],$$

\noindent
where $\Delta_{i, j} = (\max_{j'}\mu_{i, j'}) - \mu_{j}$. On the other hand, by Lemma \ref{lem:base} (and using the fact that the feedback we provide to $\BaSE$ is $O(C)$-subgaussian by Lemma \ref{lem:fesa}), this is at most

$$\alpha(2C+2) \cdot \tilde{O}\left(\sqrt{C} \cdot \sqrt{KT'}\right) = \tilde{O}(C\sqrt{\alpha KT}),$$

\noindent
as desired.
\end{proof}

\subsection{Proof of Lemma~\ref{lem:cara}}

\begin{proof}[Proof of Lemma~\ref{lem:cara}]
By Caratheodory's theorem, if $\beta w \in \cP_{C}$, we can write $\beta w$ as the convex combination of at most $NK+1$ vertices $v^{(1)}, v^{(2)}, \dots v^{(NK+1)} \in \cP_{C}$. Let us write

\begin{equation}\label{eq:cara}
\beta w = \sum_{\ell=1}^{NK+1} \lambda_{\ell} v^{(\ell)}
\end{equation}

\noindent
where $\sum_{\ell} \lambda_{\ell} = 1$. In particular, we have that:

\begin{equation}\label{eq:cara2}
Dw = \sum_{\ell=1}^{NK+1} \frac{D\lambda_{\ell}}{\beta} v^{(\ell)}.
\end{equation}

Consider the decomposition which contains $\lceil D \lambda_{\ell} / \beta \rceil$ instances of the assignment $v^{(\ell)}$ for each $1 \leq \ell \leq NK + 1$ (here, ``the assignment $v^{(\ell)}$'' refers to any assignment which sends $i$ to $j$ if $v^{(\ell)}_{ij} = 1$). By \eqref{eq:cara2}, this assignment is guaranteed to contain at least $Dw_{ij} = D/|A_{i}|$ assignments which are informative for the pair $(i, j)$, so this is a valid $C$-anonymous decomposition. Moreover, the number of assignments in this decomposition is at most

$$\sum_{\ell = 1}^{NK + 1} \left\lceil \frac{D \lambda_{\ell}}{\beta} \right\rceil \leq (NK + 1) + \sum_{\ell = 1}^{NK + 1} \frac{D \lambda_{\ell}}{\beta} = \frac{1}{\beta}D + NK + 1.$$

Likewise, if there are $R = \alpha D$ assignments $M_{1}, M_{2}, \dots M_{R}$ which form a $C$-anonymous decomposition of $G$, let $v^{(1)}, v^{(2)}, \dots, v^{(R)}$ be the vertices of $\cP_{C}$ corresponding to these assignments. Then we are guaranteed that

$$Dw \leq v^{(1)} + v^{(2)} + \dots + v^{(R)},$$

\noindent
so in particular we must have

$$\frac{1}{\alpha}w = \lambda \cdot \frac{v^{(1)} + v^{(2)} + \dots + v^{(R)}}{R} + (1-\lambda) \cdot 0$$

\noindent
for some $\lambda \in [0, 1]$. Note that the RHS is a convex combination of vertices of $\cP_{C}$ (in particular, $0 \in \cP_{C}$), so $\frac{1}{\alpha} w \in \cP_{C}$. 
\end{proof}

\subsection{Proof of Lemma~\ref{lem:member_pcs}}

\begin{proof}[Proof of Lemma~\ref{lem:member_pcs}]
Note that the vertices of $\cP_{C}(S)$ satisfy all the constraints in \eqref{eq:lp}, and every integral point satisfying \eqref{eq:lp} satisfies the conditions to be a vertex of $\cP_{C}(S)$. It therefore suffices to show that the polytope defined by \eqref{eq:lp} is integral.

But this immediately follows, since the matrix of constraints in \eqref{eq:lp} are equivalent to the totally unimodular matrix defining the bipartite matching polytope between $[N]$ and $[K]$. 
\end{proof}

\subsection{Proof of Lemma~\ref{lem:kc}}

\begin{proof}[Proof of Lemma~\ref{lem:kc}]
We show $w$ satisfies the constraints of \eqref{eq:lp} for $S = [K]$. The only nontrivial constraint is showing that 

$$\sum_{i \in [N]} w_{ij} \geq C+1, \; \forall j \in [K].$$

Note that by the definition of the user-cluster assumption, there are at least $U$ choices of $i$ for which $w_{ij} > 0$. By the definition of $w_{ij}$, if $w_{ij} > 0$, $W_{ij} \geq 1/K$. Since $U \geq K(C+1)$, the above inequality immediately follows.
\end{proof}

\subsection{Proof of Theorem~\ref{thm:nou_alg}}

\begin{proof}[Proof of Theorem~\ref{thm:nou_alg}]
Note that, by construction, in the first $T_{exp}$ rounds of this algorithm, we obtain $\approx T_{exp}/(K(2C+2))$ independent unbiased estimators $\hat{\mu_{i, j}}$ from the feedback-eliciting sub-algorithm for each user/arm pair $(i, j)$. That is, for each $(i, j)$ we are guaranteed that each $n_{ij} \geq T_{exp}/(K(2C+2))$ and that $\sigma_{ij}$ is the sum of $n_{ij}$ independent copies of an unbiased estimator for $\mu_{ij}$ with $C$-subgaussian noise.

It follows that $\sigma_{ij}/n_{ij} = m_{ij}$ is an unbiased estimator for $\mu_{ij}$ with $C/n_{ij}$-subgaussian noise. Let $\max_{ij} C/n_{ij} = \lambda$; note that $\lambda \leq KC(2C+2)/T_{exp} = \tilde{O}(K^{2/3}C^{4/3}T^{-2/3})$. 

Since $m_{ij} - \mu_{ij}$ is $\lambda$-subgaussian, it follows from Hoeffding's inequality that

$$\Pr\left[|m_{ij} - \mu_{ij}| \geq \eps\right] \leq 2\exp\left(-\frac{\eps^2}{2\lambda}\right).$$

We will set $\eps = \sqrt{4\lambda\log(NKT)}$; it then follows that 

$$\Pr\left[|m_{ij} - \mu_{ij}| \geq \eps\right] \leq \frac{2}{(NKT)^2}.$$

Union-bounding over all $NK$ pairs $(i, j) \in [N] \times [K]$, with high probability (at least $1 - 1/T^2$) all estimates $m_{ij}$ are within $\eps$ of $\mu_{ij}$. Fix $i \in [N]$. It then follows that if we let $\hat{j} = \arg\max_{j} \sigma_{ij}/n_{ij}$, that $\mu_{i\hat{j}} \geq \max_{j}\mu_{ij} - 2\eps$. In particular, for each remaining round after round $T_{exp}$, user $i$ incurs at most $\eps$ regret.

The total expected regret from rounds up to $T_{exp}$ is at most $NT_{exp}$. The total expected regret from rounds after $T_{exp}$ is at most $NT\eps$. It follows that the overall expected regret is at most $N(T_{exp} + T\eps) = \tilde{O}(NC^{2/3}K^{1/3}T^{2/3})$ as desired.
\end{proof}

\subsection{Proof of Corollary~\ref{cor:anon_decomp}}

\begin{proof}[Proof of Corollary~\ref{cor:anon_decomp}]
Follows from Lemmas \ref{lem:cara} and \ref{lem:kc}. The decomposition of $w$ into vertices of $\cP_{c}([K])$ can be found efficiently from formulation \eqref{eq:lp} of $\cP_{c}([K])$ as the intersection of a small number of half-spaces and following the algorithmic proof of Caratheodory's theorem (see e.g. \cite{hoeksma2016efficient} for details).
\end{proof}

\subsection{Proof of Lemma~\ref{lem:midc}}

\begin{proof}[Proof of Lemma~\ref{lem:midc}]
Fix $a$, and set $w^{(a)}_{ij} = 1/|A_{i} \cap S_{a}|$ if $j \in A_{i} \cap S_{a}$, and $w^{(a)}_{ij} = 0$ otherwise. We first claim $w^{(a)} \in \cP_{c}(S_{a})$. As in Lemma \ref{lem:kc}, the only nontrivial condition to check is whether

$$\sum_{i \in [N]} w^{(a)}_{ij} \geq C+1$$

\noindent
holds for all $j \in S_{a}$. As before, for each $j \in S_{a}$, there must be at least $U$ users $i$ for which $j \in A_{i}$, and thus at least $U$ users $i$ where $w^{(a)}_{ij} > 0$. But now, if $w^{(a)}_{ij} > 0$, then $w^{(a)}_{ij} \geq 1/(U/(C+1)) = (C+1)/U$, and the above inequality follows.

To see that $w \leq \sum_{i=1}^{\alpha} w^{(a)}$, note that if $j \in A_{i} \cap S_{a}$, then $w_{ij} = 1/|A_{i}| \leq 1/|A_{i} \cap S_{a}| = w^{(a)}_{ij}$. 
\end{proof}

\subsection{Proof of Corollary~\ref{cor:anon_alpha_decomp}}

\begin{proof}[Proof of Corollary~\ref{cor:anon_alpha_decomp}]
By Lemma \ref{lem:midc}, we can partition $K$ into $\alpha$ sets $S_{a}$ and write

$$\frac{1}{\alpha} w = \lambda \cdot \left(\frac{1}{\alpha} \sum_{a=1}^{\alpha} w^{(a)}\right) + (1-\lambda) \cdot 0$$

\noindent
where $\lambda \in [0, 1]$ and each $w^{(a)} \in \cP_C(S_{a})$. This proves that $\frac{1}{\alpha}w \in \cP_C$, and thus such a decomposition exists by Lemma \ref{lem:cara}. Moreover, we can efficiently find such a decomposition by decomposing each of the terms $w^{(a)}$ into a convex combination of $0$ (which lies in $\cP_{C}(S)$ for all $S$) and at most $N|S_{a}|$ other vertices of $\cP_{C}(S_{a})$ (since $\cP_{C}(S_{a})$ is only $N|S_{a}|$-dimensional).
\end{proof}

\subsection{Proof of Theorem~\ref{thm:main_lb}}

\begin{proof}[Proof of Theorem~\ref{thm:main_lb}]
Consider the following variant of the anonymous bandits problem. As in the anonymous bandits problem, an instance of this problem is specified by a number of users $N$, a number of arms $K$, an anonymity parameter $C$, a time horizon $T$, and for every pair of user $i$ and arm $j$ a $1$-subgaussian reward distribution $\Dc_{i,j}$ with mean $\mu_{i, j} \in [0, 1]$. However, unlike the anonymous bandits problem which has a centralized learner, in this variant each user is an independent learner -- moreover, we prohibit users from communicating with one another or observing the feedback of other users (in this problem, users' actiuns will not interact). On the other hand, we will tell each user $i$ all distributions $\Dc_{i', j}$ belonging to users $i' \neq i$ (so they only need to learn their own reward distributions). 

During each round $t$ each user $i$ will choose both a target arm $j_{i, t}$ (as in anonymous bandits), \textit{and also} a subset $F_{i, t} \subseteq [N] \setminus \{i\}$ of at least $C-1$ other users. User $i$ (indirectly) obtains reward $r_{i, j_{i, t}}$, but only observes as feedback the sum

$$r_{i, \pi_{t}(i)} + \sum_{i' \in F_{i, t}} r_{i', j_{i, t}},$$

\noindent
where each $r_{i, j}$ is independently drawn from $\Dc_{i, j}$. The goal is to maximize the total reward among all users, and regret is defined analogously to how it is in the anonymous bandits problem.

We first claim the above problem is strictly easier than the anonymous bandits problem, in the sense that any algorithm for the anonymous bandits problem that achieves expected regret $R$ on a specific instance can be converted to an algorithm for the above problem that achieves expected regret $R$ on the analogous instance. To see this, fix an algorithm $\mathcal{A}$ for the anonymous bandits problem and a user $i$ in our variant of the problem. Note that user $i$ can accurately simulate $\mathcal{A}$ with their knowledge of other distributions $\Dc_{i', j}$ and the feedback provided in the variant: in particular, if $\mathcal{A}$ plays assignment $\pi_t$ in round $t$, user $i$ should set $j_{t} = \pi_t(i)$ and set $F_{i, t} = \pi_{t}^{-1}(i)$ (if $|\pi_{t}^{-1}(i)| < C$, $\mathcal{A}$ got no information on $i$ in round $t$ and the user can set $F_{i, t}$ arbitrarily). User $i$ then receives as feedback the aggregate reward from their group in the current execution of $\mathcal{A}$, and can simulate aggregate rewards from other groups by sampling from $\Dc_{i', j'}$. By doing this, user $i$ receives the same expected reward in this problem as user $i$ would in the analogous anonymous bandits problem.

We now show the above problem is hard. Consider the following distribution over instances of the above problem. Fix $K$ and $C$, and choose an $N \gg K^2(C+1)$ and $T \gg \max(K, C, N)$. For each $i \in [N]$, choose an arm $j^{*}(i)$ uniformly at random from $[K]$ (this will be user $i$'s unique favorite arm; since $N \gg K \cdot K(C+1)$, with high probability this assignment will satisfy the user-cluster assumption for $U = K(C+1)$). For each pair $i \in [N]$ and $j \in [K]$, let $\Dc_{i, j} = \cN(0, 1)$ if $j \neq j^*(i)$, and let $\Dc_{i, j} = \cN\left(\sqrt{C/T}, 1\right)$ if $j = j^{*}(i)$. 

Consider the problem faced by user $i$ when faced with this distribution. If in round $t$ user $i$ selects arm $j$, then regardless of their choice of set $F_{i, t}$, they will observe a random variable drawn from $\cN(\mu_{i, j} + M, C)$, where $M = \sum_{i' \in F_{i, t}} \mu_{i', j}$ is an offset term known to user $i$. After subtracting out $M$, this means that if user $i$ selects arm $j$, they get an independent random variable from $\cN(\mu_{i, j}, C)$.  Since exactly one of the $\mu_{i, j}$ (as $j$ ranges over $[K]$) equals $\sqrt{C/T}$ and all other $\mu_{i, j} = 0$, this is exactly the hard distribution for classic $C$-Gaussian bandits. It is known (see Chapter 15 of \cite{lattimore2020bandit}) that any algorithm must incur at least $\Omega(\sqrt{CKT})$ regret on this distribution of problem instances. It therefore follows that each user $i$ incurs at least $\Omega(\sqrt{CKT})$, and therefore overall we incur at least $\Omega(N\sqrt{CKT})$ regret over all $N$ users.
\end{proof}

\subsection{Proof of Theorem~\ref{thm:lb_nou}}

\begin{proof}[Proof of Theorem~\ref{thm:lb_nou}]
For a fixed value of $T$, we will randomize uniformly between the following two instances. In both instances we will set $N=K=3$, $C=2$, and all reward distributions $\Dc_{i, j}$ will be Bernoulli distributions (defined by their mean $\mu_{i, j}$). Let $\eps = T^{-1/3}$. In the first instance we set 

$$\begin{bmatrix}
\mu_{11} & \mu_{12} & \mu_{13} \\
\mu_{21} & \mu_{22} & \mu_{23} \\
\mu_{31} & \mu_{32} & \mu_{33}
\end{bmatrix} = \begin{bmatrix}
1 & 0 & 0 \\
0 & \frac{1}{2} - \eps & \frac{1}{2} + \eps  \\
0 & \frac{1}{2} + \eps & \frac{1}{2} - \eps 
\end{bmatrix},$$

\noindent
and in the second instance we set

$$\begin{bmatrix}
\mu_{11} & \mu_{12} & \mu_{13} \\
\mu_{21} & \mu_{22} & \mu_{23} \\
\mu_{31} & \mu_{32} & \mu_{33}
\end{bmatrix} = \begin{bmatrix}
1 & 0 & 0 \\
0 & \frac{1}{2} + \eps & \frac{1}{2} - \eps  \\
0 & \frac{1}{2} - \eps & \frac{1}{2} + \eps 
\end{bmatrix}.$$

Intuitively, user $1$ likes only arm $1$ (and arm $1$ is only liked by user $1$). User $2$ either slightly prefers arm $2$ to arm $3$ or arm $3$ to arm $2$, and user $3$ has the opposite preferences of user $2$. We will let the random variable $X$ denote which instance we are in, with $X = 1$ denoting the first instance and $X=-1$ denoting the second.

Consider the set of actions that reveal information about the value of $X$. Since $C = 2$, we must allocate at least $2$ people to an arm. We know all the rewards (deterministically) for arm $1$, so we must allocate this group of people to either arm $2$ or $3$. Finally, if we allocate both user $2$ and user $3$ to arm $2$ or $3$, we learn nothing about the instance we belong to (since they have symmetric actions). Therefore the only way to gain any information about $X$ is to either allocate users $\{1, 2\}$ or $\{1, 3\}$ to one of arms $2$ or $3$. 

Any of these allocations gives us equivalent information about $X$; in one of the two instances, we will receive a random variable $1 + \mathrm{Bern}(1/2 + \eps)$, and in the other instance we will receive a random variable $1 + \mathrm{Bern}(1/2 - \eps)$. In addition, in any of these allocations, we incur at least a net regret of $1$ from querying this allocation (since we assign user $1$ to an arm with reward $0$). We call any assignment involving one of these allocations an ``information-revealing assignment''.

Assume we have an algorithm which sustains expected regret at most $R$. Since each information-revealing assignment incurs regret at least $1$, by Markov's inequality the probability the algorithm performs more than $2R$ information-revealing assignments is at most $1/2$. Now, assume that at round $t$, the algorithm has performed only $r$ information-revealing assignments. By the discussion above, this means that the only information the algorithm has regarding $X$ is $r$ samples from $\mathrm{Bern}(1/2 + X\eps)$. In general, the best statistical distinguisher between $\mathrm{Bern}(1/2 + \eps)$ and $\mathrm{Bern}(1/2 - \eps)$ requires at least $C/\eps^2$ samples (for some constant $C > 0$) to distinguish these two distributions with probability at least $3/4$. Therefore, if $r \leq C/\eps^2$, our learning algorithm cannot distinguish between $X=0$ and $X=1$ with probability greater than $3/4$, and is therefore guaranteed to incur at least $\Omega(\eps)$ regret (by e.g. allocating user $2$ to the wrong arm).

Now, if $2R < C/\eps^2$, then in every round we have performed fewer than $C/\eps^2$ information-revealing assignments, so we incur at least $\Omega(\eps T) = \Omega(T^{2/3})$ regret. On the other hand, if $2R \geq C/\eps^2$, then $R \geq C/2\eps^2 = \Omega(T^{2/3})$, so in this case the algorithm also incurs at least $\Omega(T^{2/3})$ regret.
\end{proof}

\subsection{Proof of Theorem~\ref{thm:lb_lin}}

\begin{proof}[Proof of Theorem~\ref{thm:lb_lin}]
For a fixed value of $T$, we will randomize uniformly between the following two instances. In both instances $N=K=C=2$, and in both instances all reward distributions are completely deterministic. In the first instance we set $(\mu_{11}, \mu_{12}, \mu_{21}, \mu_{22}) = (1, 0, 0, 1)$ (user $1$ likes arm $1$ and user $2$ likes arm $2$); in the second instance we set $(\mu_{11}, \mu_{12}, \mu_{21}, \mu_{22}) = (0, 1, 1, 0)$ (user $1$ likes arm $2$ and user $2$ likes arm $1$).

Since $C = 2$, the only actions we can take which result in feedback are assigning both users to the same arm, but in both problem instances this results in an aggregate reward of $1$ (and hence provides no information as to which instance we have chosen). By the choice of rewards, if an assignment receives reward $x$ in the first instance, it receives reward $2-x$ in the second instance -- since it is impossible to learn any information about the choice of instance, this means any algorithm receives expected reward $1$ per round. On the other hand, the optimal algorithm for each instance receives an expected reward of $2$ per round. This implies the expected regret is at least $T$, as desired.
\end{proof}

\end{document}